\newcommandx{\rc}[2][1=]{\todo[linecolor=green,
			backgroundcolor=blue!10,bordercolor=green,#1]{RC: #2}}
\newcommandx{\sh}[2][1=]{\todo[linecolor=green,
			backgroundcolor=red!10,bordercolor=green,#1]{HS: #2}}
\newcommandx{\jy}[2][1=]{\todo[linecolor=green,
			backgroundcolor=orange!10,bordercolor=green,#1]{JY: #2}}
\newtheorem{theorem}{Theorem}
\newtheorem{proposition}[theorem]{Proposition}
\newtheorem{lemma}[theorem]{Lemma}
\newcommand{\customlabel}[2]{%
\protected@write \@auxout {}{\string \newlabel {#1}{{#2}{}}}}
\newif\ifdraft
\newif\ifoc 
\def\isag{i\textsc{SaG}\xspace}
\def\paf{\textsc{PaF}\xspace}
\def\paft{\textsc{PaFT}\xspace}
\def\oldr{\textsc{OLDR}\xspace}
\def\oldrdisc{\textsc{DiscretizeOLDR}\xspace}
\def\triilp{\textsc{TriILP}\xspace}
\def\hexilp{\textsc{HexILP}\xspace}
\def\S{\mathcal S}
\def\G{\mathcal G}
\def\W{\mathcal W}
\title{Coordinating the Motion of Labeled Discs with Optimality Guarantees 
under Extreme Density}
\author{Rupesh Chinta and Shuai D. Han and Jingjin Yu
\thanks{R. Chinta is with the Department of Electrical and Computer 
Engineering, Rutgers University at New Brunswick, E-mail: 
rupesh.chinta@rutgers.edu. S. D. Han and J. Yu is with the Department of 
Computer Science, Rutgers University at New Brunswick. E-mails: 
\{shuai.han, jingjin.yu\}@cs.rutgers.edu.
}
}
\begin{document}
\maketitle
\begin{abstract}
We push the limit in planning collision-free motions for routing uniform 
labeled discs in two dimensions. First, from a theoretical perspective, we 
show that the constant-factor time-optimal routing of labeled discs can be 
achieved using a polynomial-time algorithm with robot density over $50\%$ in 
the limit (i.e., over half of the workspace may be occupied by the discs). 
Second, from a more practical standpoint, we provide a high performance 
algorithm that computes near-optimal (e.g., $1.x$) solutions under the same 
density setting. 
\end{abstract}

\section{Introduction}
The routing of rigid bodies (e.g., mobile robots) to desired destinations 
under dense settings (i.e., many rigid bodies in a confined workspace) is a 
challenging yet high utility task. On the side of computational complexity, 
when it comes to feasibility (i.e., finding collision-free paths for moving 
the bodies without considering path optimality), it is well known that 
coordinating the motion of translating rectangles is PSPACE-hard 
\cite{HopSchSha84} whereas planning for moving labeled discs of variable 
sizes is strongly NP-hard in simple polygons \cite{SpiYak84}. More recently, 
it is further established that PSPACE-hardness extends to the unlabeled case 
as well \cite{SolHal15}. Since computing an arbitrary solution is already 
difficult under these circumstances, finding optimal paths (e.g., minimizing 
the task completion time or the distances traveled by the bodies) are at least 
equally hard. Taking a closer look at proof constructions in 
\cite{HopSchSha84,SolHal15,SpiYak84}, one readily observes that the 
computational difficulty increases as the bodies are packed more tightly 
in the workspace. On the other hand, in many multi-robot applications, 
it is desirable to have the capacity to have many robots efficiently and
(near-)optimally navigate closely among each other, e.g., in automated 
warehouses \cite{WurDanMou08,enright2011optimization}. Provided that 
per-robot efficiency and safety are not compromised, having higher robot 
density directly results in space and energy\footnote{With higher robot 
density, a fixed number of robots can fit in a smaller workspace, reducing the 
distance traveled by the robots} savings, thus enhancing productivity. 

As a difficult but intriguing geometric problem, the optimal routing of rigid 
bodies has received much attention in many research fields, particularly 
robotics. While earlier research in the area tends to focus on the 
structural properties and complete (though not necessarily scalable) algorithmic 
strategies \cite{ErdLoz86,LavHut98b,GhrOkaLav05,PeaClaMcp08,SolHal12}, more 
recent studies have generally attempted to provide efficient and scalable 
algorithms with either provable optimality guarantees or impressive empirical 
results, or both. For the unlabeled case, a polynomial-time algorithm from 
\cite{TurMicKum14} computes trajectories for uniform discs which minimizes the 
maximal path length traveled by any disc. The completeness 
of the algorithm depends on some clearance assumptions between the discs and 
between a disc and the environment. In \cite{SolYu15}, a polynomial-time 
complete algorithm algorithm is proposed also for unlabeled discs that 
optimizes the total travel distance, with a more natural clearance assumption 
as compared to \cite{TurMicKum14}. The clearance assumption (among others, 
the distance between two unit discs is at least $4$) translates to a maximum 
density of about $23\%$, i.e., the discs may occupy at most $23\%$ of the 
available free space. For the labeled case, under similar clearance settings, 
an integer linear programming (ILP) based method is provided in \cite{YuRus14RR} 
for minimizing solution makespan. Though without polynomial running time 
guarantee, the algorithm is complete and appears to performs well in practice. 
Complete polynomial-time algorithms also exist that do not require any 
clearance in the start and goal configurations \cite{HanRodYu18IROS}. However, 
the supported density is actually lower in this case as the algorithm needs to 
expand the start and goal configurations so that the clearance conditions 
in \cite{YuRus14RR} is satisfied. 

In this work, we study the problem of optimally routing labeled uniform unit 
discs in a bounded continuous two dimensional workspace. As the main result,
we provide a complete, deterministic, and polynomial-time algorithm that 
allow up to more than half of the workspace to be occupied by the discs while 
simultaneously ensuring $O(1)$ (i.e., constant-factor) time optimality of the 
computed paths. We also provide a practical and fast algorithm for the same 
setting without the polynomial running time guarantee. More concretely, our 
study brings the following contributions: {\em (i)} We show that when the 
distance between the centers of any two labeled unit discs is more than 
$\frac{8}{3}$, the continuous problem can be transformed into a multi-robot 
routing problem on a triangular grid graph with minimal optimality loss. A 
separation of $\frac{8}{3}$ implies a maximum density of over $50\%$. {\em 
(ii)} We develop a low polynomial-time constant-factor time-optimal algorithm 
for routing discs on a triangular grid with the constraint that no two 
discs may travel on the same triangle concurrently. {\em (iii)} We develop 
a fast and novel integer linear programming (ILP) based algorithm that computes 
time-optimal routing plans for the triangular grid-based multi-robot routing 
problem. Combining {\em (i)} and {\em (ii)} yields the $O(1)$ time-optimal 
algorithm while combining {\em (i)} and {\em (iii)} results in the more 
practical and highly optimal algorithm. In addition, the $\frac{8}{3}$ 
separation proof employs both geometric arguments and computation-based 
verification, which may be of independent interest. 

Our work leans on graph-theoretic methods for multi-robot routing, 
e.g., \cite{KorMilSpi84,Yu18RSS,StaKor11,WagChoC11,boyarski2015icbs,cohen2016improved,
YuLav16TRO}. In particular, our constant-factor time-optimal routing algorithm 
for the triangular grids adapts from a powerful routing method for rectangular 
grid in \cite{Yu18RSS} that actually works for arbitrary dimensions. However, 
while the method from \cite{Yu18RSS} comes with strong theoretical guarantee 
and runs in low polynomial time, the produced paths are not ideal due to the 
large constant factor. This prompts us to also look at more practical algorithms
and we choose to build on the fast ILP-based method from \cite{YuLav16TRO}, 
which allows us to properly encode the additional constraints induced by the 
triangular grid, i.e., no two discs may simultaneously travel along any 
triangle. 

\textbf{Organization}. The rest of the paper is organized as follows. We 
provide a formal statement of the routing problem and its initial treatment in 
Section~\ref{section:problem}. In Section~\ref{section:discretize}, we
show how the problem may be transformed into a discrete one on 
a special triangular grid. Then, in Section~\ref{section:o1-opt-algorithm}
and Section~\ref{section:ilp-model}, we present a polynomial time algorithm 
with $O(1)$-optimality guarantee and a fast algorithm that computes highly 
optimal solutions, respectively. We conclude in Section~\ref{section:conclusion}.

\section{Preliminaries}\label{section:problem}
\subsection{Labeled Disc Routing: Problem Statement}
Let $\W$ denote a closed and bounded $w \times h$ rectangular region. For 
technical convenience, we assume $w = 4n_1 + 2$ and $h = \frac{4}{\sqrt{3}}n_2 
+ 2$ for integers $n_1 \ge 2$ and $n_2\ge 3$. There are $n$ labeled unit discs 
residing in $\mathcal W$. Also for technical reasons, we assume that the discs 
are open, i.e., two discs are not in collision when their centers are exactly 
distance two apart. These discs may move in any direction with an instantaneous 
velocity $v$ satisfying $\vert v \vert \in [0, 1]$. Let $\mathcal C_f \subset 
\mathbb R^2$ denote the free configuration space for a single robot in $\W$.
The centers of the $n$ discs are initially located at $\S = \{s_1, \ldots, s_n\} 
\subset \mathcal C_f$, with goals $\G = \{g_1, \ldots, g_n\} \subset \mathcal 
C_f$. For all $1 \le i \le n$, a disc labeled $i$ initially located at $s_i$ 
must move to $g_i$. 

Beside planning collision-free paths, we want to optimize the resulting path 
quality by minimizing the {\em global task completion time}, also commonly 
known as the {\em makespan}. Let $P = \{p_1, \ldots, p_n\}$ denote a set of 
feasible paths with each $p_i$ a continuous function, defined as 
\begin{align}
p_i: [0, t_f] \to C_f, p_i(0) = s_i, p_i(t_f) = g_i,
\end{align}
the makespan objective seeks a solution that minimizes $t_f$, i.e., let 
$\mathcal P$ denote the set of all feasible solution path sets, the task is 
to find a set $P$ with $t_f(P)$ approaching the optimal solution 
\begin{align}\label{equation:minimum-time}
t_{min} := \min_{P \in \mathcal P} t_f(P). 
\end{align}

Positive separation between the labeled discs is necessary to render the 
problem feasible (regardless of optimality). In this work, we 
require the following clearance condition between a pair of $s_i$ 
and $s_j$ and a pair of $g_i$ and $g_j$:
\begin{align}
\forall 1 \le i, j \le n,\quad \parallel s_i - s_j \parallel > \frac{8}{3},\>\> \parallel 
g_i - g_j \parallel > \frac{8}{3}. \label{eq:separation}
\end{align}

For notational convenience, we denote the problem address in this work as 
the {\em Optimal Labeled Disc Routing} problem (\oldr). 
By assumption~\eqref{eq:separation} and assuming that the unit discs occupy
the vertices of a regular triangular grid, the 
discs may occupy 
$({\frac{1}{2}\pi *1^2})/(\frac{1}{2}\frac{8}{3}*\frac{4}{\sqrt{3}}) \approx 
51\%$
of the free space in the limit (to see that this is the case, we note 
that each equilateral triangle with side length $\frac{8}{3}$ contains half of
a unit disc; each corner of the triangle contains $\frac{1}{6}$ of a disc). 

\subsection{Workspace Discretization}
Similar to \cite{ErdLoz86,PeaClaMcp08,SolHal12,YuRus14RR}, we approach \oldr 
through first discretizing the problem, starting by embedding a discrete graph 
within $\W$. The assumption of $w = 4n_1 + 2$ and $h = 
\frac{4}{\sqrt{3}}n_2 + 2$  on the workspace dimensions allows the embedding 
of a triangular grid with side length of $\frac{4}{\sqrt{3}}$ in $\W$ 
such that the grid has $2n_1$ columns and about $n_2$ (zigzagging) rows of 
equilateral triangles, and a clearance of $1$ from $\partial\W$. An example is 
provided in Fig.~\ref{fig:tri-grid}. 

\begin{figure}[ht]
\begin{center}
\begin{overpic}[width={\ifoc 4.8in \else 2.66in \fi},tics=5]
{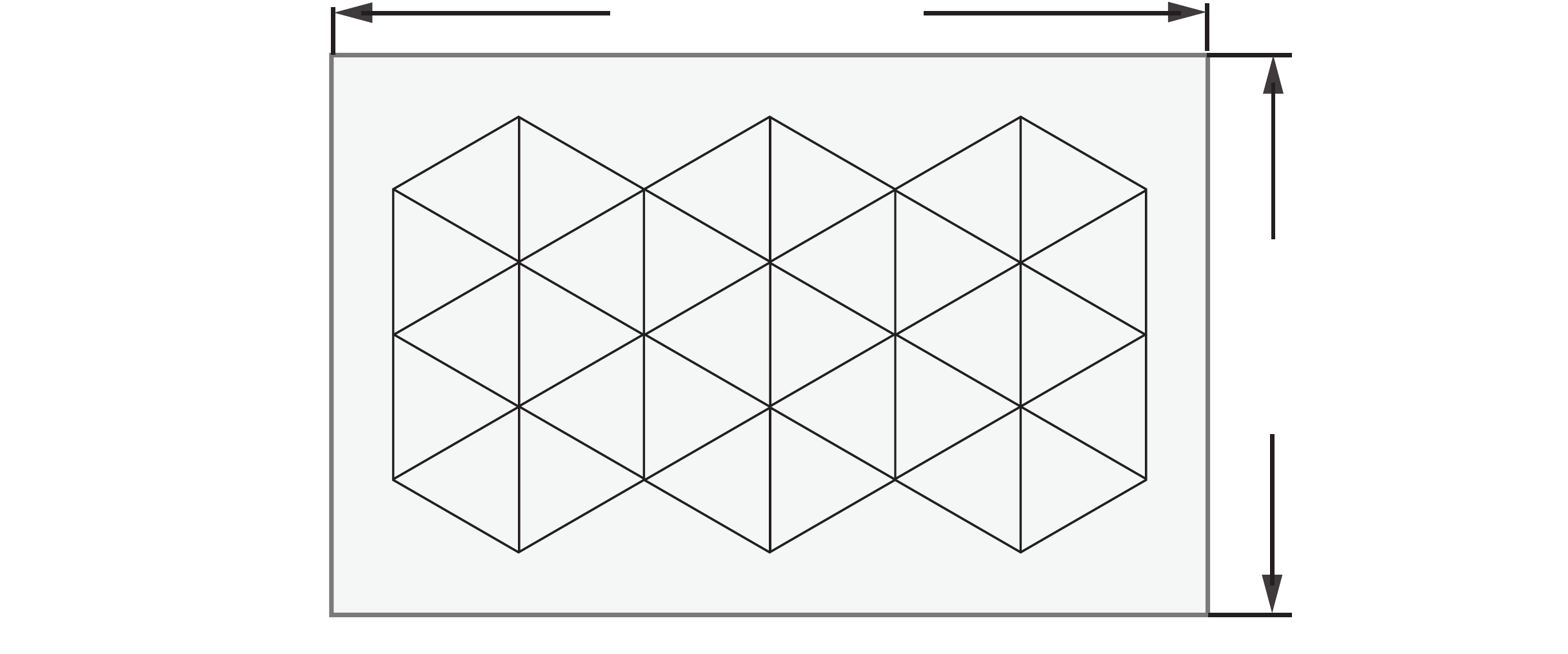}
\put(48,40.5){{\small $w$}}
\put(79.6,18){{\small $h$}}
\put(23,4){{\small $\W$}}
\end{overpic}
\end{center}
\caption{\label{fig:tri-grid} An example of a workspace $\W$ with $w = 
14$ and $h = 3\frac{4}{\sqrt{3}} + 2$, i.e., $n_1 = 3$ and $n_2 = 3$.
The embedded triangular grid is at least distance $1$ from the boundary of 
$\W$. The grid has $6$ columns and $2+$ zigzagging rows.}
\end{figure}

Throughout the paper, we denote the underlying graph of the triangular 
grid as $G$. Henceforth, we assume such a triangular grid $G$ for a given 
workspace $\W$. The choice of the side length of $\frac{4}{\sqrt{3}}$ 
for the triangular grid ensures that two unit discs located on adjacent 
vertices of $G$ may move simultaneously on $G$ without collision when the 
angle formed by the two traveled edges is not sharp (Fig.~\ref{fig:hex-tri}).
\begin{figure}[h]
\begin{center}
\begin{overpic}[width={\ifoc 4.8in \else 2.66in \fi},tics=5]
{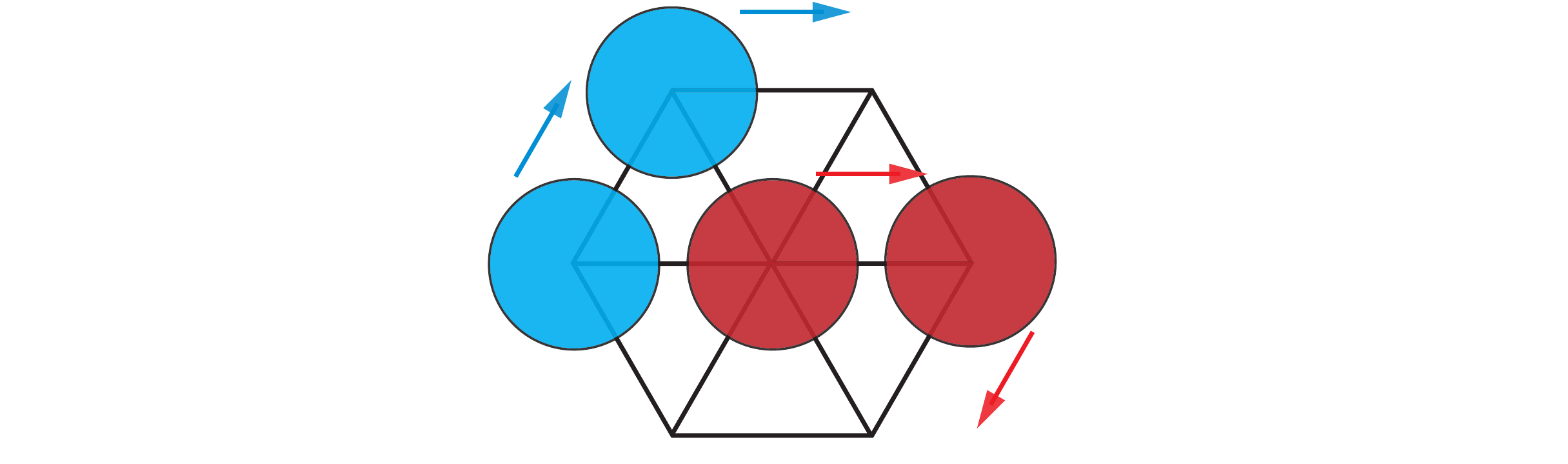}
\end{overpic}
\end{center}
\caption{\label{fig:hex-tri} On a triangular grid with a side length of 
$\frac{4}{\sqrt{3}}$, two unit discs, initially residing on two adjacent
vertices of the grid, may travel concurrently on the grid without collision 
when the two trajectories do not form a sharp angle. In the figure, the 
two cyan discs may travel as indicated without incurring collision. On the 
other hand, the red discs will collide if they follow the indicated 
travel directions.}
\end{figure}

We note that $w \ge 10$ and $h \ge 3\frac{4}{\sqrt{3}} + 2$ are needed for 
our algorithm to have completeness and optimality guarantees. For smaller $w$
or $h$, an instance may not be solvable. On the other hand, the discrete 
increment assumption on $w$ and $h$ are for technical convenience and are 
not strictly necessary. Without these discrete increments assumptions, we 
will need additional (and more complex) clearance assumptions between 
the discs and $\partial\W$, which does not affect the $51\%$ density bound  
since $\partial\W$ contributes $\Theta(w + h)$ to the area of $\W$ which is 
$wh$. The ratio is $\Theta(\frac{w + h}{wh})$ which goes to zero as both $w$ 
and $h$ increase. We also mention that, although this study only considers 
bounded rectangular workspace without static obstacles within the workspace, 
our results can be directly combined with \cite{YuRus14RR} to support static 
obstacles. 

\section{Translating Continuous Problems to Discrete Problems with Minimal 
Penalty on Optimality}\label{section:discretize}
A key insight enabling this work is that, under the separation 
condition~\eqref{eq:separation}, a continuous \oldr can be translated into a
discrete one with little optimality penalty. The algorithm for achieving this 
is relatively simple. For a given $\W$ and the corresponding $G = (V, E)$ 
embedded in $\W$, for each $s_i \in \S$, let $v_i^s \in V$ be a vertex of $G$ 
that is closest to $s_i$ (if there are more than one such $v_i^s$, pick an 
arbitrary candidate). After all $v_i^s$'s (let $V_{\S} = \{v_i^s\}$) are 
identified for $1 \le i \le n$, let $d_{\max} = \max_i \parallel v_i^s - 
s_i \parallel$. Note that $d_{\max} \le \frac{4}{3}$. We then let the labeled discs 
at $s_i$ move in a straight line to the corresponding $v_i^s$ at a constant 
speed given by 
$
\frac{\parallel v_i^s - s_i\parallel}{d_{\max}}
$, which means that for all $1 \le i \le n$, disc $i$ will reach $v_i^s$ in 
exactly one unit of time. The same procedure is then applied to $\G$ to obtain 
$V_{\G} = \{v_i^g\}$. The discrete \oldr is fully defined by $(G, V_{\S}, 
V_{\G})$. We denote the algorithm as \oldrdisc. Fig.~\ref{fig:to-discrete} 
illustrates the assignment of a few unit discs to vertices of the triangular 
grid. 
\begin{figure}[h]
\begin{center}
\begin{overpic}[width={\ifoc 4.8in \else 2.66in \fi},tics=5]
{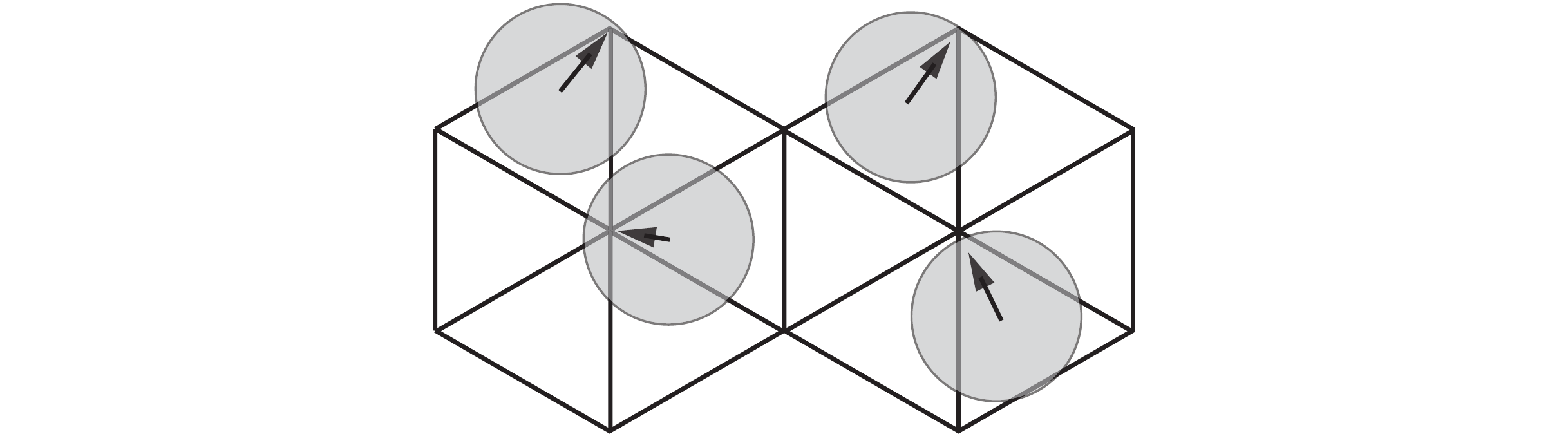}
\end{overpic}
\end{center}
\caption{\label{fig:to-discrete} An illustration of assigning a few unit 
discs to vertices of the triangular grid.}
\end{figure}

Because it takes a constant amount computational effort to deal with one disc,
\oldrdisc runs in linear time, i.e.,
\begin{proposition}
\oldrdisc has a running time of $O(n)$. 
\end{proposition}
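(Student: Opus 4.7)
The plan is to argue that every substep of \oldrdisc either touches each disc a constant number of times or performs a single pass over the $n$ discs, and then sum the per-disc work.

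First I would pinpoint the only potentially non-trivial step, namely locating, for each $s_i$, a nearest vertex $v_i^s \in V$ of the embedded triangular grid $G$. Because $G$ is a regular triangular grid with known side length $\frac{4}{\sqrt{3}}$ and known offset from $\partial \W$, I can exhibit an explicit $O(1)$ procedure: from the two coordinates of $s_i$, divide by the relevant spacing and take floors to identify the parallelogram cell of the grid containing $s_i$; that cell has a constant number of incident vertices (at most four), and among these I can compute Euclidean distances and take the minimum in constant time. Thus computing each $v_i^s$ costs $O(1)$, so producing the whole set $V_{\S} = \{v_i^s\}_{i=1}^n$ costs $O(n)$.

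Next, I would handle the remaining operations in one pass each: computing $d_{\max} = \max_i \|v_i^s - s_i\|$ is a single linear scan of $n$ numbers, hence $O(n)$; assigning to each disc the speed $\|v_i^s - s_i\|/d_{\max}$ and recording its straight-line path from $s_i$ to $v_i^s$ is $O(1)$ per disc, hence $O(n)$ total. Repeating the identical procedure on $\G$ to obtain $V_{\G}$ contributes another $O(n)$. Summing gives $O(n)$ overall, establishing the proposition.

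The main (very mild) obstacle is justifying the $O(1)$ nearest-vertex lookup; everything else is a trivial bookkeeping argument. I would therefore spend the bulk of the proof describing the closed-form coordinate computation that maps a point in $\W$ to its enclosing grid cell, and noting that the number of vertices to test within that cell is bounded by a constant independent of $n$ and of the workspace dimensions.
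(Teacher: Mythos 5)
Your proposal is correct and follows the same reasoning as the paper, which simply observes that each disc requires only a constant amount of computational effort (the paper states this in one line without spelling out the nearest-vertex lookup). Your elaboration of the $O(1)$ closed-form grid-cell computation and the linear scans for $d_{\max}$ and the speed assignments just fills in the details of that same per-disc constant-work argument.
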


The rest of this section is devoted to showing that \oldrdisc is 
collision-free and incurs little penalty on time optimality. We only need 
to show this for translating $\S$ to $V_{\S}$; translating $\G$ to $V_{\G}$ 
is a symmetric operation. We first make the straightforward observation that
\oldrdisc adds a makespan penalty of up to four because translating $\S$ to 
$V_{\S}$ takes exactly one unit of time. Same holds for translating $\G$ to 
$V_{\G}$. 
\begin{proposition}
\oldrdisc incurs a makespan penalty of up to four. 
\end{proposition}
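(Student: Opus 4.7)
The plan is to bound the excess makespan of our overall solution relative to the continuous optimum $t_{min}$. Since translating $\S$ to $V_{\S}$ at the start and translating $V_{\G}$ to $\G$ at the end are symmetric, I would carry out the analysis only for the former and inherit the same bound for the latter.

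For each translation phase, I would first verify that by construction disc $i$ travels in a straight line from $s_i$ to $v_i^s$ at constant speed $\norm{v_i^s - s_i}/d_{\max}$. Since $d_{\max}$ upper-bounds every $\norm{v_i^s - s_i}$, this speed never exceeds the velocity cap of $1$; moreover, each disc arrives exactly at $t=1$. Summing over the two translation phases contributes $2$ units of direct overhead relative to whatever time is required on the discretized endpoints.

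Next, I would account for the gap between the continuous optimum $t_{min}$ for the original instance $(\S, \G)$ and the optimum for the discretized instance $(V_{\S}, V_{\G})$ that the discrete solver ultimately handles. From any optimal continuous solution of makespan $t_{min}$ for $(\S, \G)$, I can construct a feasible continuous solution for $(V_{\S}, V_{\G})$ of makespan $t_{min} + 2$ by prepending the reverse of \oldrdisc's initial translation (from $V_{\S}$ to $\S$, taking $1$ unit) and appending the final translation (from $\G$ to $V_{\G}$, taking $1$ unit). Hence the discretized instance's continuous optimum exceeds $t_{min}$ by at most $2$. Combining these $2$ units with the $2$ units from the translation phases themselves yields the claimed total penalty of up to $4$.

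The hard part will be verifying that the concatenation in the second step is genuinely collision-free: the prepended and appended straight-line motions must not collide with each other or with the continuous trajectory they sandwich. This reduces to the same pairwise-separation analysis needed to certify that \oldrdisc is itself collision-free, which I expect to be treated in the following lemma; once that geometric fact is in place, the concatenation is feasible and the bound of $4$ falls out cleanly.
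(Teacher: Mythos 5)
Your argument is correct and matches the paper's (very terse) justification: the paper simply observes that each of the two endpoint translations takes exactly one unit of time, with the remaining two units implicitly accounting for the shift in the achievable optimum, which you make explicit via the prepend/append construction. Your worry about the concatenation is unfounded since the three phases are sequential in time, so feasibility follows directly from the collision-freeness of \oldrdisc established in the subsequent theorem, exactly as you anticipated.
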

We then show \oldrdisc assigns a unique $v_i^s \in V$ for a given 
$s_i \in S$.

\begin{lemma}\label{l:unique-assignment}
\oldrdisc assign a unique $v_i^s \in V$ for an $s_i \in S$.
\end{lemma}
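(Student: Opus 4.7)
The plan is to prove this by contradiction. Suppose, toward a contradiction, that there exist indices $i \neq j$ with $v_i^s = v_j^s$; call this common vertex $v$. By the construction of \oldrdisc, $v$ is chosen as a closest vertex of $G$ to $s_i$, and also as a closest vertex of $G$ to $s_j$. So both distances $\|s_i - v\|$ and $\|s_j - v\|$ are at most $\max_k \|v_k^s - s_k\| = d_{\max}$, and the text preceding the lemma has already observed that $d_{\max} \le \tfrac{4}{3}$.

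Given this bound, the triangle inequality immediately yields
\[
\|s_i - s_j\| \;\le\; \|s_i - v\| + \|v - s_j\| \;\le\; \tfrac{4}{3} + \tfrac{4}{3} \;=\; \tfrac{8}{3},
\]
which directly contradicts the strict separation assumption $\|s_i - s_j\| > \tfrac{8}{3}$ from~\eqref{eq:separation}. This contradiction establishes injectivity of the assignment $s_i \mapsto v_i^s$.

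The only quantitatively nontrivial input is the bound $d_{\max} \le \tfrac{4}{3}$, which is why the constant $\tfrac{4}{\sqrt{3}}$ was chosen for the side length of $G$: an equilateral triangle of side $L = \tfrac{4}{\sqrt{3}}$ has circumradius $L/\sqrt{3} = \tfrac{4}{3}$, so every point lying inside a face of $G$ is within $\tfrac{4}{3}$ of one of that face's three vertices. The clearance-$1$ fit of $G$ inside $\W$, together with the discretization assumptions on $w$ and $h$, is precisely what guarantees that every admissible $s_i \in \mathcal C_f$ actually lies inside (the closure of) some face of $G$, so that the circumradius bound applies uniformly. I do not anticipate any real obstacle here, since this geometric fact is already announced right above the lemma statement and merely needs to be invoked; the proof of the lemma itself is essentially a one-line triangle-inequality argument once $d_{\max} \le \tfrac{4}{3}$ is in hand.
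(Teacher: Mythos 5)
Your proposal is correct and is essentially the paper's own argument: both rest on the fact that the circumradius of an equilateral triangle of side $\frac{4}{\sqrt{3}}$ is $\frac{4}{3}$, so each $s_i$ is within $\frac{4}{3}$ of its assigned vertex, and then a one-line triangle-inequality computation shows two points assigned to the same vertex would be at most $\frac{8}{3}$ apart, contradicting the separation assumption~\eqref{eq:separation}. No meaningful difference in approach.
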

\begin{proof}Each equilateral triangle in $G$ has a side length of 
$\frac{4}{\sqrt{3}}$, which means that the distance from the center of a 
triangle to its vertices is $\frac{4}{3}$. Therefore, for any $s_i \in S$, 
it must be at most of distance $\frac{4}{3}$ to at least one vertex of $G$.
Let this vertex be $v_i^s$. Now given any other $s_j \in \S$, assume 
\oldrdisc assigns to it $v_j^s$. We argue that $v_i^s \ne v_j^s$ because 
otherwise 
\[
\frac{4}{3} + \frac{4}{3}  \ge \parallel v_i - v_i^s \parallel + \parallel 
v_j - v_j^s \parallel = \parallel v_i - v_i^s \parallel + \parallel v_j - v_i^s \parallel 
\ge \parallel v_i - v_j \parallel > \frac{8}{3},
\]
which is a contradiction. Here, the first $\ge$ holds because $\parallel 
v_i - v_i^s \parallel \le \frac{4}{3}$ and $\parallel v_j - v_j^s \parallel 
\le \frac{4}{3}$ by \oldrdisc; the second $\ge$ is due to 
the triangle inequality. The $>$ is due to assumption ~\eqref{eq:separation}.  
~\qed
\end{proof}

Next, we establish that \oldrdisc is collision-free. For the proof, we use 
geometric arguments assisted with computation-based case analysis. 
\begin{theorem}
\oldrdisc guarantees collision-free motion of the discs.
\end{theorem}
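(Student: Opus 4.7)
The plan is to verify that during the $\S \to V_{\S}$ translation no two discs ever come within distance $2$; the $\G \to V_{\G}$ case is symmetric. Since all discs reach their grid targets in the same unit of time, I would parameterize the synchronized motion by $\lambda \in [0,1]$ via $p_i(\lambda) = (1-\lambda) s_i + \lambda v_i^s$. For each pair $i \ne j$, the squared pairwise distance
\[
f_{ij}(\lambda) = \|(1-\lambda)(s_i - s_j) + \lambda(v_i^s - v_j^s)\|^2
\]
is a convex quadratic in $\lambda$ (its second derivative is $2\|(v_i^s - v_j^s) - (s_i - s_j)\|^2 \ge 0$), so its minimum on $[0,1]$ is attained either at an endpoint or at the unique interior critical point. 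The goal is to show $f_{ij}(\lambda) \ge 4$ throughout.

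The endpoint bounds are immediate: $f_{ij}(0) = \|s_i - s_j\|^2 > (8/3)^2 > 4$ by~\eqref{eq:separation}, and $f_{ij}(1) = \|v_i^s - v_j^s\|^2 \ge (4/\sqrt{3})^2 = 16/3 > 4$ since Lemma~\ref{l:unique-assignment} gives $v_i^s \ne v_j^s$ and distinct vertices of $G$ are separated by at least one edge length $4/\sqrt{3}$. Hence only an interior dip could cause a collision.

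The interior analysis leverages two structural facts: (i) $v_i^s$ is the nearest grid vertex to $s_i$, so $s_i$ lies in the Voronoi hexagon of $v_i^s$ (a regular hexagon of circumradius $4/3$), and symmetrically for $s_j$; and (ii) $\|s_i - s_j\| > 8/3$. Using the translation/rotation symmetry of $G$, fix $v_i^s$ at the origin and enumerate $v_j^s$ by distance shell in the triangular lattice: $4/\sqrt{3}$, $4$, $8/\sqrt{3}$, $4\sqrt{7/3}$, $\ldots$. From the fourth shell onward, the elementary Voronoi bound $\|p_i(\lambda) - p_j(\lambda)\| \ge \|v_j^s\| - 8/3$ already exceeds $2$. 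For the three near shells, the number of distinct geometries modulo symmetry is finite, and I would close each by minimizing $f_{ij}$ over the compact semi-algebraic region defined by the two hexagon containments, the separation $\|s_i - s_j\| > 8/3$, and $\lambda \in [0,1]$---either by closed-form KKT analysis or, as the introduction signals, by a computer-verified case check on these small constrained quadratic programs.

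The main obstacle is the adjacent-vertex case, where $\|v_i^s - v_j^s\| = 4/\sqrt{3} \approx 2.31$ barely clears the collision threshold $2$. Here the strict separation $\|s_i - s_j\| > 8/3$ combined with the narrow Voronoi hexagons (whose perpendicular-bisector edge across $v_i^s v_j^s$ is at distance only $2/\sqrt{3}$ from each center) tightly constrains $s_i - s_j$ to be closely aligned with $v_j^s - v_i^s$, so the difference vector $p_i(\lambda) - p_j(\lambda)$ rotates only slightly as $\lambda$ varies and cannot dip below length $2$. Formalizing this alignment bound---by decomposing $s_i, s_j$ into components parallel and perpendicular to $v_j^s - v_i^s$ and invoking the Voronoi-edge inequalities together with $\|s_i - s_j\| > 8/3$ to cap the transverse parts---is where pure geometry is most delicate and a supplementary numerical verification is most valuable.
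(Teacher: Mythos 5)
Your plan is sound and, at its core, follows the same strategy as the paper's own proof: reduce to a single pair of synchronously and linearly moving discs, exploit the fact that each start lies within $\frac{4}{3}$ of its assigned vertex (your Voronoi hexagons, the paper's Lemma~\ref{l:unique-assignment} and fundamental triangle), and settle the remaining compact family of configurations by a finite, computation-assisted case check. The difference is in how that finite check is organized. The paper first argues that it suffices to consider $\parallel s_i - s_j\parallel$ exactly $\frac{8}{3}$, restricts $s_i$ by symmetry to a $\frac{1}{12}$ fundamental triangle, samples $s_i$ on an $\varepsilon$-grid and $s_j$ on the corresponding annulus, proves the perturbation bound $\delta_{\varepsilon}(s_i,s_j) \le 2\varepsilon + \delta_{\varepsilon}(s_i',s_j')$, and reports a verified margin ($\approx 0.076 > 2\varepsilon$ at $\varepsilon = 0.025$), i.e., an explicit, implementable certificate. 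You instead keep the constraint $\parallel s_i - s_j\parallel > \frac{8}{3}$, enumerate the relative offset $v_j^s - v_i^s$ by lattice shells, dispose of all sufficiently distant shells by the triangle inequality, and pose each near shell as a small constrained minimization of the convex quadratic $f_{ij}$ over the two Voronoi cells with the explicit target $f_{ij} \ge 4$; this is cleaner in that it names the true collision threshold $2$ (the paper's write-up, as stated, only concludes $\delta > 0$) and could in principle be closed exactly by KKT or polynomial-optimization arguments rather than by sampling. The trade-off is that your decisive case---the adjacent-vertex shell, where $\parallel v_i^s - v_j^s\parallel = \frac{4}{\sqrt{3}} \approx 2.31$ barely clears $2$---is only sketched via an alignment heuristic and explicitly deferred to numerical verification, which is exactly the role the paper's Python computation (with its $2\varepsilon$ Lipschitz-type error bound) plays; to turn your outline into a complete proof you would need either to carry out that constrained analysis for the near shells in closed form or to supply a sampling scheme with an explicit error bound of the kind the paper provides.
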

\begin{proof}We fix a vertex $v \in V$ of the triangular grid $G$. By
Lemma~\ref{l:unique-assignment}, at most one $s_i \in \S$ may be matched with 
$v$, in which case $v$ becomes $v_i^s$. If this is the case, then $s_i$ must
be located within one of the six equilateral triangles surrounding $v$. Assume
with out loss of generality that $s_i$ belongs to an equilateral triangle 
$\triangle uvw$ as shown in Fig.~\ref{fig:collision-free}(a). The rules of 
\oldrdisc further imply that $s_i$ must fall within one (e.g., the orange 
shaded triangle in Fig~\ref{fig:collision-free} (a)) of the six triangles 
belonging to $\triangle uvw$ that are formed by the three bisectors of 
$\triangle uvw$. Let this triangle be $\triangle vox$. Now, let $s_j \ne s_i$ 
be the center of 
a labeled disc $j$; assume that disc $j$ go to some $v_j^s \in V$. By symmetry, 
if we can show that disc $i$ with $s_i \in \triangle vox$ and an arbitrary 
disc $j$ with $\parallel s_i - s_j\parallel > \frac{8}{3}$ will not collide 
with each other as disc $i$ and disc $j$ move along $s_iv_i^s$ and 
$s_jv_j^s$, respectively, then \oldrdisc is a collision-free procedure. 
\begin{figure}[h]
\begin{center}
\begin{overpic}[width={\ifoc 4.8in \else 2.66in \fi},tics=5]
{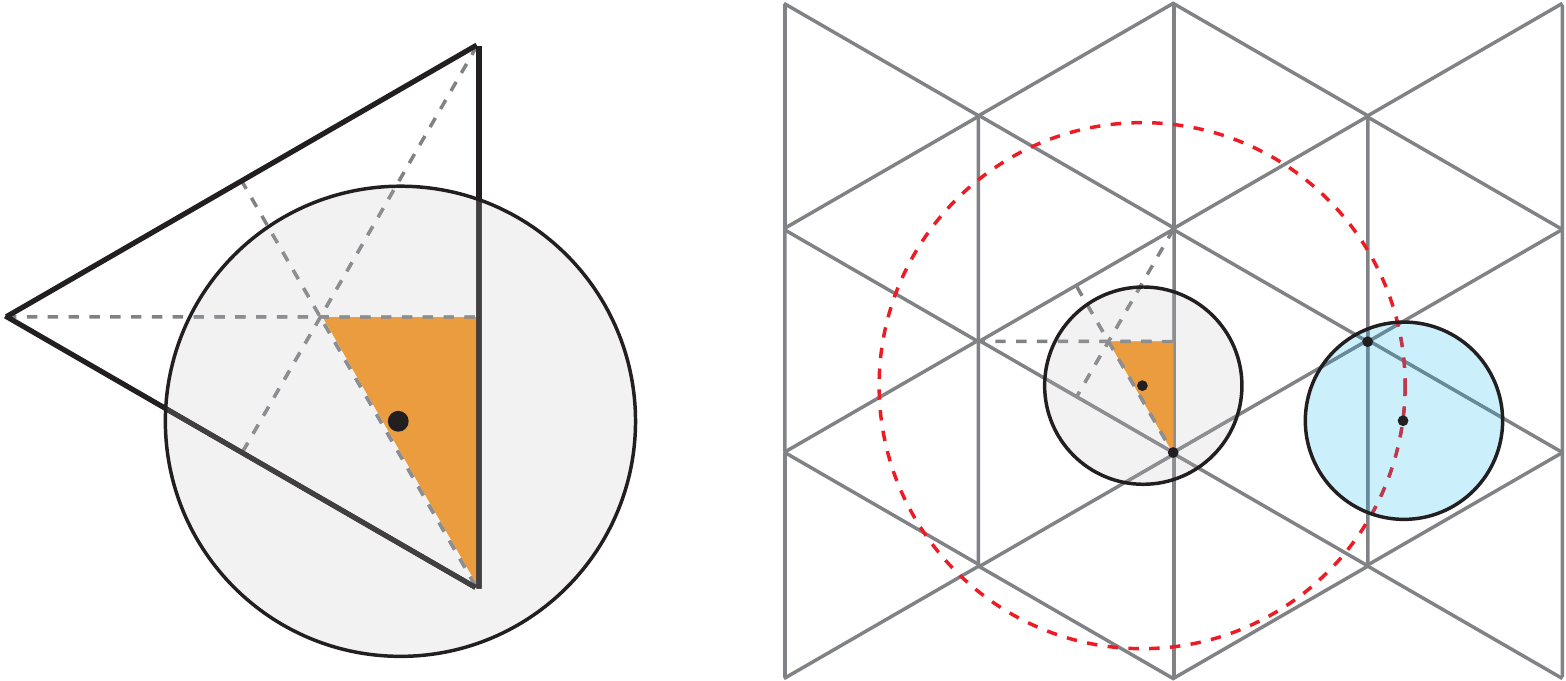}
\put(-3,22){{\small $u$}}
\put(32,41){{\small $w$}}
\put(22,3.5){{\small $v(v_i^s)$}}
\put(17,24){{\small $o$}}
\put(32,22){{\small $x$}}
\put(26,18){{\small $s_i$}}
\put(73.5,18.5){{\small $s_i$}}
\put(75.5,11){{\small $v_i^s$}}
\put(90.5,15){{\small $s_j$}}
\put(85,18.5){{\small $v_j^s$}}
\put(18, -4){{\small (a)}}
\put(72.5, -4){{\small (b)}}
\end{overpic}
\end{center}
\caption{\label{fig:collision-free} (a) By symmetry, for an $s_i \in \S$ to 
be moved to some $v = v_i^s$, we only need to consider the region $\triangle
vox$, which is $\frac{1}{12}$-th of all possible places where $s_i$ may appear. 
(b) For a fixed $s_i$, we only need to consider $s_j$ that is of exactly 
$\frac{8}{3}$ distance from it.}
\end{figure}

We then make the observation that, if disc $i$ and disc $j$ collide as we align
their centers to vertices of the triangular grid, at some point, the distance 
between their centers must be exactly $\frac{8}{3}$ before they may collide 
(when their centers are of distance less than $2$). Following this reasoning, 
instead of showing a disc $j$ with $\parallel s_i - s_j \parallel > 
\frac{8}{3}$ will not collide with disc $i$, it suffices to show the same 
only for $\parallel s_i - s_j \parallel = \frac{8}{3}$. That is, it is 
sufficient to show that, for any $s_i \in \triangle vox$ and any $s_j$ on a
circle of radius $\frac{8}{3}$ centered at $s_i$, disc $i$ and disc $j$ will 
not collide as $s_i$ and $s_j$ move to $v_i^s$ and $v_j^s$, respectively, 
according to the rules specified by \oldrdisc (see 
Fig.~\ref{fig:collision-free}(b) for an illustration).

To proceed from here, one may attempt direct case-by-case geometric analysis, 
which appears be quite tedious. We instead opt for a more direct computer 
assisted proof as follows. We first partition $\triangle vox$ using 
axis-aligned square grids with side length $\varepsilon$; $\varepsilon$ is 
some parameter to be determined through computation. For each of the 
resulting $\varepsilon \times \varepsilon$ square region (a small green square 
in Fig.~\ref{fig:discretize-square-grid}), we assume that $s_i$ is at its 
center. For each fixed $s_i$, an annulus centered at $s_i$ with inner radius 
$\frac{8}{3} - \frac{\sqrt{2}\varepsilon}{2}$ and outer radius $\frac{8}{3} + 
\frac{\sqrt{2}\varepsilon}{2}$ is obtained (part of which is illustrated as 
in Fig.~\ref{fig:discretize-square-grid}). Given this construction, for any 
potential $s_i'$ in a fixed $\varepsilon \times \varepsilon$ square, a circle 
of radius $\frac{8}{3}$ around it falls within the annulus. 

We then divide the outer perimeter of the annulus into arcs of length no more 
than $\sqrt{2}\varepsilon$. For each piece, we obtain a roughly square region 
with side length $\sqrt{2}\varepsilon$ on the annulus, one of which is shown 
as the red square in Fig.~\ref{fig:discretize-square-grid}. We take the center 
of the square as $s_j$. We then fix $v_j^s$ accordingly (note that it may be 
the case that $v_j^s$ is not unique for the square that bounds a piece of arc, 
in which case we will attempt all potentially valid $v_j^s$'s). 
\begin{figure}[h]
\begin{center}
\begin{overpic}[width={\ifoc 4.8in \else 2.66in \fi},tics=5]
{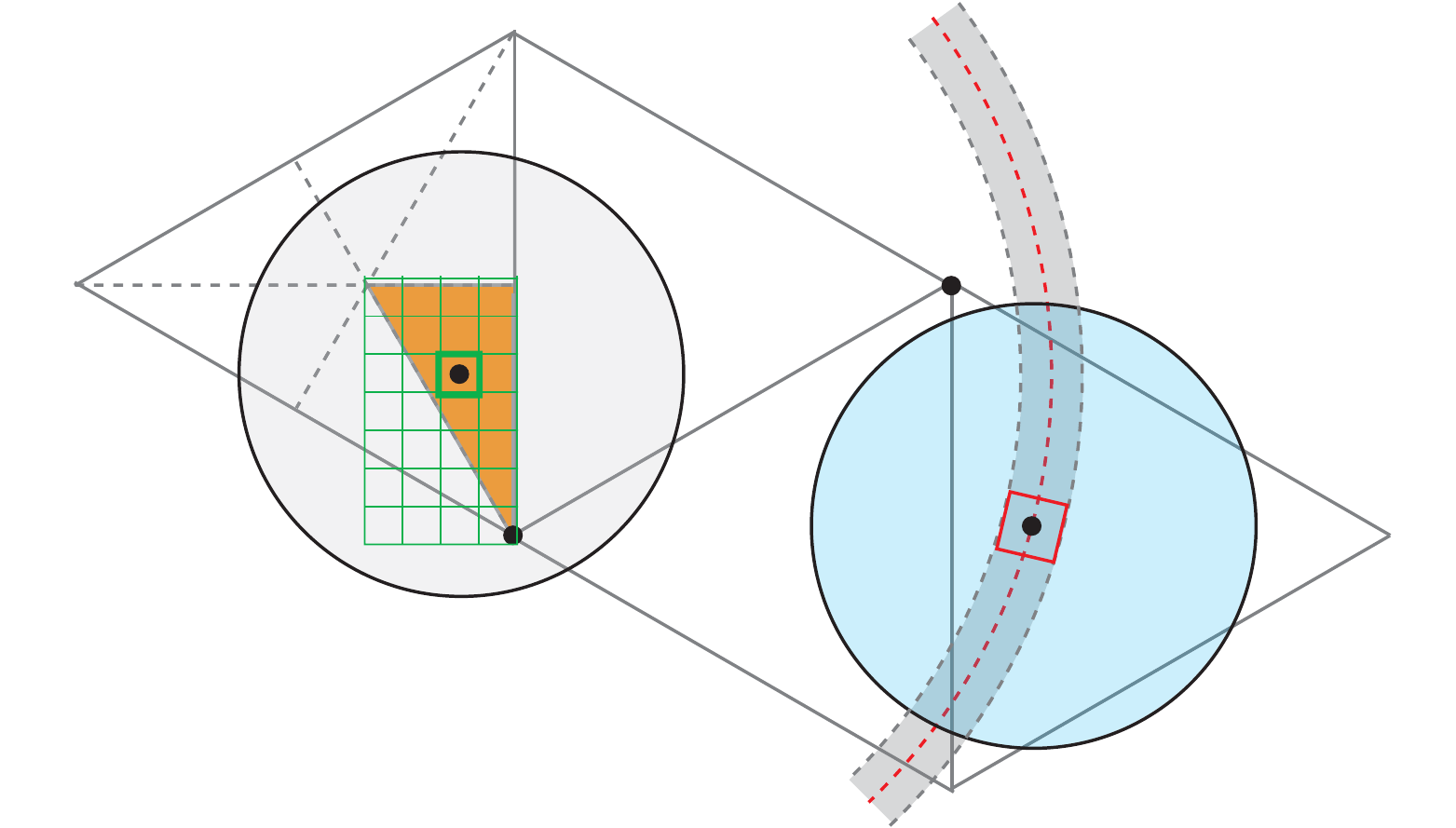}
\put(31,33.2){{\small $s_i$}}
\put(37,19.2){{\small $v_i^s$}}
\put(73,18){{\small $s_j$}}
\put(64,40){{\small $v_j^s$}}
\end{overpic}
\end{center}
\caption{\label{fig:discretize-square-grid} Illustration of picking a pair
of $s_i$ and $s_j$ for a computer based proof.}
\end{figure}

For each fixed set of $s_i, v_i^s, s_j$, and $v_j^s$, following the rules of 
\oldrdisc, we may (analytically) compute the shortest distance between the 
centers of disc $i$ and disc $j$ as disc $i$ is moved from $s_i$ to $v_i^s$ 
while disc $j$ is moved from $s_j$ to $v_j^s$. Let the trajectory followed by 
the two centers in this case be $\tau_i(t)$ and $\tau_j(t)$, respectively, 
with $0 \le t \le 1$ (as guaranteed by \oldrdisc), we may express the distance 
(for fixed $\varepsilon$, $s_i, v_i^s, s_j$, and $v_j^s$) as 
$\delta_{\varepsilon}(s_i, s_j) = \min_t \parallel \tau_i(t) - \tau_j(t)\parallel
$.
For any $s_i'$ that falls in the same $\varepsilon \times 
\varepsilon$ box as $s_i$, if disc $i$ is initially located at $s_i'$, let 
it follow a trajectory $\tau_i'(t)$ to $v_i^s$. We observe that $\parallel
\tau_i(t) - \tau_i'(t) \parallel \le \varepsilon$. 
This holds because as the center of disc $i$ moves from anywhere within the 
$\varepsilon \times \varepsilon$ box to $v_i^s$, $\parallel \tau_i(t) - 
\tau_i'(t) \parallel$ continuously decreases until it reaches to zero at 
$v_i^s$, which is the same for both $s_i$ and $s_i'$. Therefore, the initial 
uncertainty is the largest, which is no more than $\varepsilon$ because 
$\parallel s_i - s_i' \parallel \le \frac{\sqrt{2}\varepsilon}{2}$. The same 
argument applies to disc $j$, i.e., $\parallel \tau_j(t) - \tau_j'(t) \parallel 
\le \varepsilon$. Therefore, we have
\begin{align*}
\delta_{\varepsilon}(s_i, s_j) & = \min_t \parallel \tau_i(t) - \tau_j(t)\parallel  \\
& = \min_t \parallel \tau_i(t) - \tau_i'(t) + \tau_j'(t) - \tau_j(t) + 
\tau_i'(t) - \tau_j'(t) \parallel \\
& \le \min_t (\parallel \tau_i(t) - \tau_i'(t) \parallel + \parallel \tau_j'(t) - 
\tau_j(t) \parallel + \parallel \tau_i'(t) - \tau_j'(t) \parallel) \\
& \le 2\varepsilon + \min_t \parallel \tau_i'(t) - \tau_j'(t) \parallel \\
& \le 2\varepsilon + \delta_{\varepsilon}(s_i', s_j').
\end{align*}

If $\delta_{\varepsilon}(s_i, s_j) > 2\varepsilon$, then we may conclude that 
$\delta_{\varepsilon}(s_i', s_j') > 0$. We verify using a python program that 
gradually lowers $\varepsilon$ and compute the minimum 
$\delta_{\varepsilon}(s_i, s_j)$ over all possible choices of $s_i$ and $s_j$. 
When $\varepsilon = 0.025$, we obtain that $\delta_{\varepsilon}(s_i, s_j)$ 
is lower bounded at approximately $0.076$, which is larger than $2\varepsilon 
= 0.05$. Therefore, \oldrdisc is a collision-free algorithm. ~\qed
\end{proof}

With \oldrdisc, in Section~\ref{section:o1-opt-algorithm} and 
Section~\ref{section:ilp-model}, we assume a discrete multi-robot 
routing problem is given as a 3-tuple $(G, V_{\S}, V_{\G})$ in which 
$G$ is the unique triangular grid embedded in $\W$. Also, $V_{\S}, V_{\G} 
\subset V$ and $|V_{\S}| = |V_{\G}| = n$. 

\section{Constant-Factor Time-Optimal Multi-Robot Routing on Triangular Grid}
\label{section:o1-opt-algorithm}

In \cite{Yu18RSS}, it is established that constant-factor makespan time-optimal 
solution can be computed in quadratic running time on a $k$-dimensional 
orthogonal grid $G$ for an arbitrary fixed $k$. It is a surprising result that 
applies even when $n = |V|$, i.e., there is a robot or disc on every vertex of 
grid $G$. The functioning of the algorithm, \paf (standing for {\em partition 
and flow}), requires putting together many algorithmic techniques. However, the 
key requirements of the \paf algorithm hinges on three basic operations, which 
we summarize here for the case of $k = 2$. Due to limited space, only limited 
details are provided. 

First, to support the case of $n = |V|$ while ensuring desired optimality, it 
must be possible to ``swap'' two adjacent discs in a constant sized 
neighborhood in a constant number of steps (i.e. makespan), as illustrated in 
Fig.~\ref{fig:23}. This operation is essential in ensuring makespan time optimality 
as the locality of the operation allows many such operations to be concurrently 
carried out. 
\begin{figure}[h]
\begin{center}
\begin{overpic}[width={\ifoc 4.8in \else 3.42in \fi},tics=5]{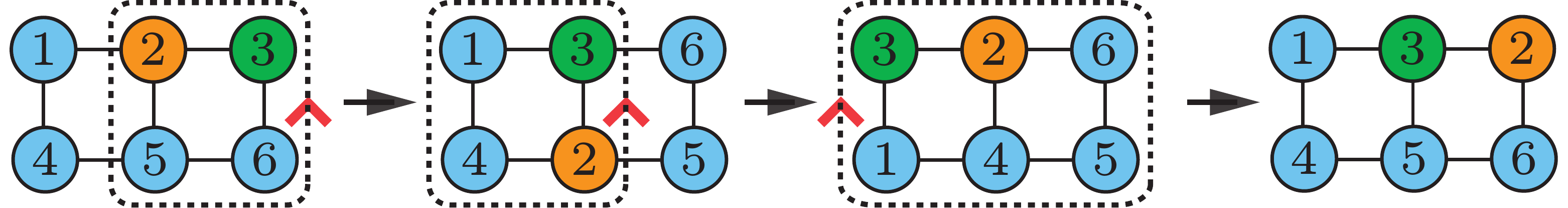}
\end{overpic}
\end{center}
\caption{\label{fig:23} Discs $2$ and $3$ may be ``swapped'' in three steps on 
a $3 \times 2$ grid, implying that any two discs can be swapped in $O(1)$ steps 
without net effect on other discs.}
\end{figure}

Second, it must be possible to iteratively {\em split} the initial problem into 
smaller sub-problems. This is achieved using a {\em grouping} operation that 
in turn depends on the swap operation. We illustrate the idea using an example. In 
Fig.~\ref{fig:split}(a), a $8 \times 4$ grid is split in the middle into two 
smaller grids. Each vertex is occupied by a disc; we omit the individual labels. 
The lightly (cyan) shaded discs have goals on the right $4\times 4$ grid. The 
grouping operation moves the $7$ lightly shaded discs to the right, which also 
forces the $7$ darker shaded discs on the right to the left side. This is 
achieved through multiple rounds of concurrent swap operations either along 
horizontal lines or vertical lines. The result is Fig.~\ref{fig:split}(b). This 
effectively reduces the initial problem $(G, V_{\S},V_{\G})$ to two disjoint 
sub-problems. Repeating the iterative process can actually solve the problem 
completely but does not always guarantee constant-factor makespan time 
optimality in the worst case. This is referred to as the \isag algorithm in 
\cite{Yu18RSS}.
\begin{figure}[h]
\begin{center}
\begin{overpic}[width={\ifoc 6in \else 3.04in \fi},tics=5]{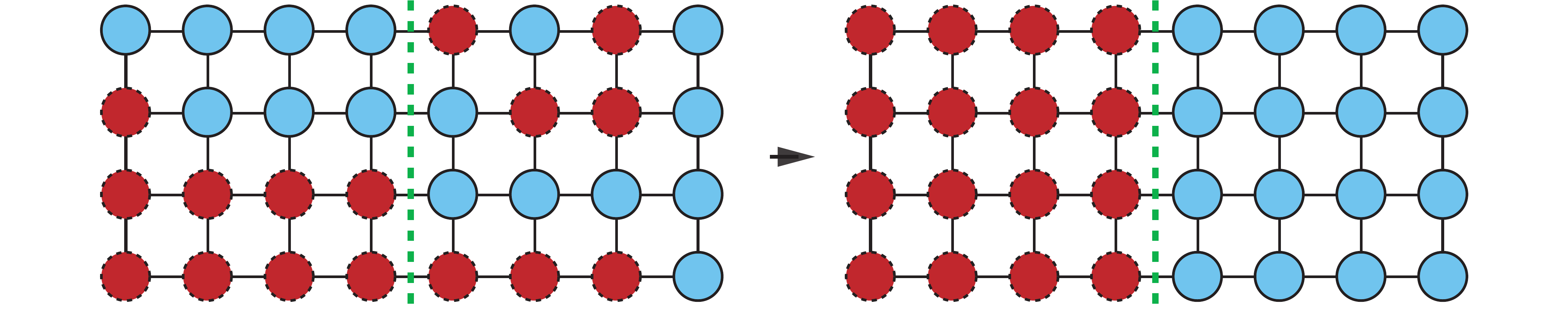}
\put(25, -2.5){{\small (a)}}
\put(72.5, -2.5){{\small (b)}}
\end{overpic}
\end{center}
\caption{\label{fig:split}Illustration of an iteration of the \isag algorithm.}
\end{figure}

Lastly, \paf achieves guaranteed constant-factor optimality using \isag as a 
subroutine. It begins by computing the maximum distance between any pair of 
$v_i^s \in V_{\S}$ and $v_i^g \in V_{\G}$ over all $1 \le i \le n$. Let this 
distance be $d_g$. $G$ is then partitioned into square grid {\em cells} of size 
roughly $5d_g \times 5d_g$ each. With this partition, a disc must have its goal 
in the same cell it is in or in a neighboring cell. After some pre-processing 
using \isag, the discs that need to cross cell boundaries can be arranged to 
be near the destination cell boundary. At this point, multiple global 
circulations (a circulation may be interpreted as discs rotating synchronously 
on a cycle on $G$） are arranged so that every disc ends up in a $5d_g \times 
5d_g$ cell partition where its goal also resides. A rough illustration of the 
global circulation concept is provided in Fig.~\ref{fig:circulation}. Then, 
a last round of \isag is invoked at the cell level to solve the problem, which 
yields a constant-factor time-optimal solution even in the worst case. 
\begin{figure}[h]
\begin{center}
\begin{overpic}[width={\ifoc 6in \else 3.04in \fi},tics=5]{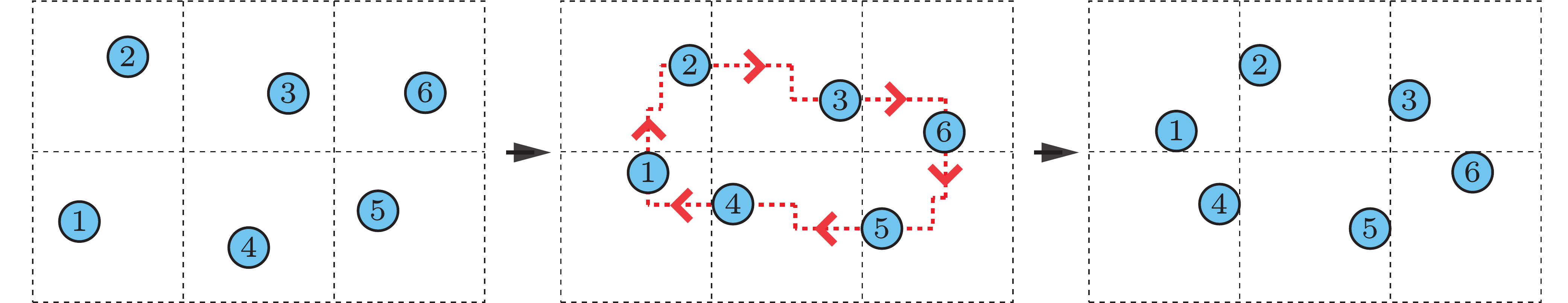}
\put(2.5, 17){{\small $1$}}
\put(12, 17){{\small $2$}}
\put(21.5, 17){{\small $3$}}
\put(2.5, 0.5){{\small $4$}}
\put(12, 0.5){{\small $5$}}
\put(21.5, 0.5){{\small $6$}}
\put(15.5, -3){{\small (a)}}
\put(45.5, -3){{\small (b)}}
\put(82, -3){{\small (c)}}
\end{overpic}
\end{center}
\caption{\label{fig:circulation}Illustration of a single global circulation 
constructed and executed by \paf (the discs and the underlying grid cells are 
not fully drawn). (a) In six partitioned ($5d_g \times 5d_g$) cells numbered 
$1-6$ in $G$, there are six labeled discs with goals in the correspondingly 
numbered cells, e.g., disc $1$ should be in cell $1$. (b) Using \isag in each 
cell, discs $1-6$ are moved to boundary areas and a cycle is formed on $G$ for 
robot routing. (c) Moving all discs on the cycle by one edge synchronously, 
all discs are now in the desired cell; no other discs (not shown) have crossed 
any cell boundary.}
\end{figure}

To adapt \paf to the special triangular grid graph $G$, we need to: 
{\em (i)} identify a constant sized local neighborhood for the swapping 
operation to work, {\em (ii)} identify two ``orthogonal'' directions that 
cover $G$ for the \isag algorithm to work, and {\em (iii)} ensure that 
the constructed global circulation can be executed. Because of the limitation 
imposed by the triangular grid, i.e., any two edges of a triangle cannot be 
used at the same time (see Fig.~\ref{fig:hex-tri}), achieving these conditions
simultaneously becomes non-trivial. In what follows, we will show how we may 
{\em simulate} \paf on a triangular grid $G$ under the assumption that all 
vertices of $G$ are occupied by labeled discs, i.e., $n = |V|$. For the case 
of $n < |V|$, we may treat empty vertices as having ``virtual discs'' placed 
on them. 

Because two edges of a triangle cannot be simultaneously used, we use two 
adjacent hexagons on $G$ (e.g., the two red full hexagons in 
Fig.~\ref{fig:swapping}(a)) to simulate the two square cells in Fig.~\ref{fig:23}. 
It is straightforward to verify that the swap operation can be carried out 
using two adjacent hexagons. There is an issue, however, as not all vertices
of $G$ can be covered with a single hexagonal grid. For example, the two red 
hexagons in Fig.~\ref{fig:swapping}(a) left many vertices uncovered. This can be 
resolved using up to three sets of interleaving hexagon grids as illustrated 
in Fig.~\ref{fig:swapping}(a) (here we use the assumption that $\W$ has dimensions 
$w \ge 10$ and $h \ge 3 \frac{4} {\sqrt{3}} + 2$, which limits the possible 
embeddings of the triangular grid $G$). We note that for the particular graph 
$G$ in Fig.~\ref{fig:swapping}(a), we only need the red and the green hexagons to 
cover all vertices. 
\begin{figure}[h]
\begin{center}
\begin{overpic}[width={\ifoc 4.8in \else 3.04in \fi},tics=5]{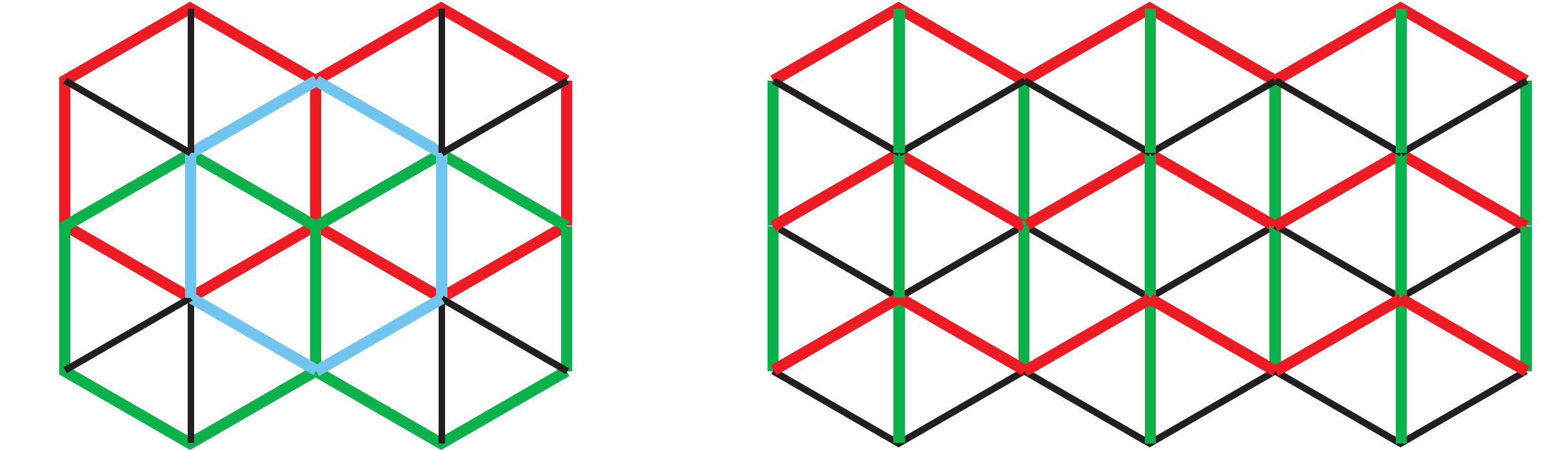}
\put(18, -4){{\small (a)}}
\put(71, -4){{\small (b)}}
\end{overpic}
\end{center}
\caption{\label{fig:swapping}(a) We may use the red, green, and cyan hexagon 
grids on $G$ to perform the swap operation. (b) The red and green paths 
may serve as orthogonal paths for carrying out the split and group operations
as required by \isag.}
\end{figure}

To realize requirement {\em (ii)}, i.e., locating two sets of ``orthogonal'' 
paths for carrying out \isag iterations, we may use the red and green paths as 
illustrated in Fig.~\ref{fig:swapping}(b). The remaining issue is that the red 
waving paths do not cover the few vertices at the bottom of $G$ (the green 
paths, on the other hand, covers all vertices of $G$). This issue can be 
addressed with some additional swaps (e.g., with a second pass) which 
still only takes constant makespan during each iteration of \isag and does not 
impact the time optimality or running time of \isag. 

The realization of requirement {\em (iii)} is straightforward as the only 
restriction here is that the closed paths for carrying out circulations on $G$ 
cannot contain sharp turns. We can readily realize this using any one of the 
three interleaving hexagonal grids on $G$ that we use for the swap operation, 
e.g., the red one in Fig.~\ref{fig:swapping}(a). Clearly, any cycle on a 
hexagonal grid can only have angles of $\frac{2\pi}{3}$ which are obtuse. We 
note that there is no need to cover all vertices for this global 
circulation-based routing operation because only a fraction ($< \frac{1}{2}$, 
see \cite{Yu18RSS} for details) of discs need to cross the $5d_g \times 5d_g$ 
cell boundary. On the other hand, any one of the three hexagonal grids cover 
about $\frac{2}{3}$ of the vertices on a large $G$. 


Calling the adapted \paf algorithm on the special triangular grid as \paft, 
we summarize the discussion in this section in the following result. 
\begin{lemma}\label{t:paft}\paft computes constant-factor makespan 
time-optimal solutions for multi-robot routing on triangular grids in 
$O(|V|^2)$ time. 
\end{lemma}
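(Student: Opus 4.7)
The plan is to reduce Lemma~\ref{t:paft} to verifying that the three structural requirements used by \paf in \cite{Yu18RSS} —- namely, a constant-time local swap primitive, two ``orthogonal'' directions supporting the split-and-group iterations of \isag, and global circulations executable without sharp turns —- are all realizable on the special triangular grid $G$. Once (i)-(iii) are in place, the constant-factor makespan optimality and the $O(|V|^2)$ running-time bound follow by replaying the analysis of \paf with the triangular-grid versions of the primitives substituted in.

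For requirement (i), I would formalize the swap primitive using a pair of adjacent hexagonal faces of $G$ as the analogue of the $3\times 2$ block in Fig.~\ref{fig:23}. The task is to exhibit an explicit constant-length schedule that exchanges two adjacent discs inside the hexagon pair, leaving every other disc in place, while at no time step using two edges of any triangle simultaneously (cf.\ Fig.~\ref{fig:hex-tri}). This reduces to a finite case analysis over the symmetry classes of adjacent-disc placements inside the hexagon pair. Because a single hexagonal cover omits some vertices of $G$, I would handle the deficit using up to three interleaving hexagonal covers as in Fig.~\ref{fig:swapping}(a); since the number of covers is a constant depending only on $G$'s local structure, the composite swap primitive still has $O(1)$ makespan.

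For requirement (ii), I would take the red and green zigzag paths of Fig.~\ref{fig:swapping}(b) as the two orthogonal directions. I would verify that their union spans $V$ up to a constant-width boundary strip at the bottom of $G$, and that simultaneous moves along either family never activate two edges of a common triangle. The boundary deficit is cleaned up by one extra pass of the swap primitive, which only inflates each \isag iteration by an additive $O(1)$ in makespan and therefore preserves both the optimality factor and the running time. For requirement (iii), I would pick any one of the three hexagonal covers as the circulation backbone: every cycle on a hexagonal lattice turns by exactly $\tfrac{2\pi}{3}$, which is obtuse, so the triangular no-sharp-angle constraint is automatically respected, and since a single hexagonal cover reaches about $\tfrac{2}{3}$ of $V$ while \paf only needs to cross-route a fraction strictly less than $\tfrac{1}{2}$ of the discs through the circulation, the backbone has enough capacity. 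Plugging these three primitives into the \paf pipeline -— partition $G$ into cells of side $\Theta(d_g)$, use \isag inside cells and along orthogonal paths to stage boundary-crossing discs, execute one global circulation on the hexagonal backbone to deliver discs to their destination cell, then finish with one more \isag pass per cell —- yields a makespan of $O(d_g)$ against a lower bound of $d_g$, giving the constant factor, and $O(|V|^2)$ total work exactly as in \cite{Yu18RSS}.

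The main obstacle I anticipate is requirement (i): the swap primitive must be shown to work uniformly under the triangular constraint across all relative orientations of the two discs within a hexagon pair, and the fix-up sweeps that patch vertices uncovered by a particular hexagonal cover (or missed by the red zigzag at the bottom boundary) must themselves be realized by sharp-angle-free motions. All other steps are largely translations of the rectangular-grid argument, but the case enumeration around the swap and the boundary patches is where the triangular geometry genuinely differs from the orthogonal case.
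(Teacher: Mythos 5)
Your proposal follows essentially the same route as the paper: it reduces the lemma to verifying the same three requirements (a constant-step swap via pairs of adjacent hexagons with up to three interleaving hexagonal covers, the red/green zigzag paths as the orthogonal directions for \isag with an extra constant-makespan pass to patch the uncovered bottom vertices, and a hexagonal-cover circulation whose $\frac{2\pi}{3}$ turns satisfy the no-sharp-angle constraint and whose roughly $\frac{2}{3}$ coverage suffices for the $<\frac{1}{2}$ fraction of boundary-crossing discs), and then inherits the constant-factor makespan and $O(|V|^2)$ bound from the \paf analysis of \cite{Yu18RSS}. The level of detail, including the points you flag as needing case analysis, matches the paper's own (informal) justification, so there is nothing substantive to add.
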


Combining \oldrdisc with \paft then gives us the following. In deriving the 
running time result, we use the fact that $wh = \Theta(|V|) = \Omega(n)$. 

\begin{theorem}In a rectangular workspace $\W$ with $w \ge 10$ and $h \ge 
3\frac{4}{\sqrt{3}} + 2$, for $n$ labeled unit discs with start and goal 
configurations with separation over $\frac{8}{3}$, constant-factor 
makespan time-optimal collision-free paths connecting the two configurations 
may be computed in $O(w^2h^2)$ time.
\end{theorem}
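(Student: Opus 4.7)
The plan is to compose \oldrdisc and \paft and then verify both the running time and the constant-factor makespan bound claimed in the statement. In other words, I would run \oldrdisc to pull the continuous starts and goals onto vertices of the embedded triangular grid $G$, then invoke \paft on the resulting discrete instance $(G, V_{\S}, V_{\G})$, and finally ``undo'' the snap by moving the discs from their terminal grid vertices back to the original goals $\G$. The collision-freeness of the snap-on and snap-off phases has already been established, and \paft gives a collision-free discrete routing that respects the triangular no-sharp-turn constraint, so the composed motion is feasible.

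For the running time, I would first note that \oldrdisc contributes only $O(n)$. The embedded grid has $n_1 = \Theta(w)$ columns and $n_2 = \Theta(h)$ zigzagging rows, so $|V| = \Theta(wh)$. Applying Lemma~\ref{t:paft} gives $O(|V|^2) = O(w^2 h^2)$ for \paft, which dominates. Since $n \le |V|$, the overall cost is $O(w^2 h^2)$ as claimed.

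For the optimality guarantee, let $t^\ast_G$ denote the optimal makespan (in continuous time units, using the edge length $\tfrac{4}{\sqrt{3}}$) of the discrete instance produced by \oldrdisc. By Lemma~\ref{t:paft}, \paft produces a discrete solution of makespan at most $C \cdot t^\ast_G$ for some absolute constant $C$, and \oldrdisc contributes an additive penalty of at most $4$. So the composed makespan is at most $4 + C \cdot t^\ast_G$. To relate $t^\ast_G$ to $t_{min}$, I would use that the speed bound $|v| \le 1$ implies $t_{min} \ge \max_i \|s_i - g_i\|$, and then bound $\|v_i^s - v_i^g\| \le \|s_i - g_i\| + \tfrac{8}{3}$ by the triangle inequality together with the $\tfrac{4}{3}$ snapping bound from \oldrdisc. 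Finally, because $G$ is a regular triangular grid with side length $\tfrac{4}{\sqrt{3}}$, one can route a single disc from $v_i^s$ to $v_i^g$ in $O(\|v_i^s - v_i^g\|)$ time, so trivially $t^\ast_G = O(\max_i \|v_i^s - v_i^g\|) = O(t_{min}) + O(1)$, which substituted into the previous inequality yields the desired constant-factor bound $O(t_{min})$.

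The main obstacle I anticipate is the last relation $t^\ast_G = O(t_{min}) + O(1)$, because $t^\ast_G$ is the optimal makespan of a \emph{multi-disc} routing instance on $G$ subject to the triangular no-sharp-turn constraint, while the single-disc argument above only gives a bound per disc. This is, however, precisely the setting in which Lemma~\ref{t:paft} applies: \paft's constant-factor guarantee is against $t^\ast_G$, and $t^\ast_G$ itself is upper-bounded by the serial execution of individual shortest paths, which is $O(\text{diam}(G))$ and in any event at most a constant multiple of $\max_i \|v_i^s - v_i^g\|$ plus an additive term bounded by the worst-case constant swap cost. Putting everything together then produces a makespan of the form $O(t_{min}) + O(1)$, establishing the constant-factor time optimality.
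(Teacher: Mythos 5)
Your overall route is the same as the paper's: snap the continuous instance onto the triangular grid with \oldrdisc (collision-free, additive makespan penalty at most four), run \paft on $(G,V_{\S},V_{\G})$, and get the running time from $|V|=\Theta(wh)$ together with the $O(|V|^2)$ bound of Lemma~\ref{t:paft}, so that $O(n)+O(|V|^2)=O(w^2h^2)$. That part is fine and matches the paper, which indeed proves the theorem simply by composing these ingredients.

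The genuine gap is in your justification of $t^\ast_G = O(t_{min}) + O(1)$. You correctly identify this as the crux, but the patch you offer does not work: ``serial execution of individual shortest paths'' bounds $t^\ast_G$ by the \emph{sum} of the per-robot distances (order $n\cdot d_g$), not their maximum, and it is not even directly feasible when $n=|V|$ since there are no empty vertices to route through without swap gadgets; moreover $O(\mathrm{diam}(G))$ is useless here, because $\mathrm{diam}(G)=\Theta(w+h)$ can be arbitrarily larger than $t_{min}$ when all discs only need to move a short distance, so it cannot yield a constant-factor bound. Appealing to ``\paft's guarantee is against $t^\ast_G$'' is circular for the purpose of upper-bounding $t^\ast_G$ itself. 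The correct way to close the argument is to use the stronger form of the guarantee inherited from the \paf analysis (and preserved by the adaptation in Lemma~\ref{t:paft}): the makespan produced is at most a constant times $d_g := \max_i \mathrm{dist}_G(v_i^s, v_i^g)$, the per-robot distance lower bound. This simultaneously certifies $t^\ast_G = O(d_g)$ and, since $d_g = O(\max_i \|v_i^s - v_i^g\|) = O(\max_i\|s_i-g_i\|) + O(1) = O(t_{min}) + O(1)$ by your own triangle-inequality step (with the unit speed bound and the $\frac{4}{3}$ snapping radius), gives the desired $O(t_{min})+O(1)$ makespan for the composed solution. With that substitution your argument is sound; as written, the step would fail.
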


We conclude this section with the additional remark that \paft should mainly 
be viewed as providing a theoretical guarantee than being a practical algorithm 
due to the fairly large constant in the optimality guarantee. 

\section{Fast Computation of Near-Optimal Solutions via Integer Linear 
Programming}\label{section:ilp-model}
From the practical standpoint, the \oldrdisc algorithm opens the possibility 
for plugging in any discrete algorithm for multi-robot routing. Indeed, 
algorithms including these from 
\cite{StaKor11,WagChoC11,boyarski2015icbs,cohen2016improved,YuLav16TRO}
may be modified to serve this purpose. In this paper, we develop a new 
integer linear programming (ILP) approach based on a time-expanded network
structure proposed in \cite{YuLav16TRO}. The benefit of using an ILP model 
is its high-level of flexibility and high computational performance when 
combined with appropriate solvers, e.g., Gurobi \cite{gurobi}. 

\subsection{Integer Linear Programming Model for Multi-Robot Routing on 
Triangular Grids}
The essential idea behind an ILP-based approach, e.g., \cite{YuLav16TRO}, is 
the construction of a directed time-expanded network graph representing
the possible flow of the robots over time. Given a discrete problem instance 
$(G, V_{\S}, V_{\G})$, the network is constructed by taking the vertex set $V$ 
of $G$ and making $T  + 1$ copies of it. Each copy represent an integer 
time instance starting from $0$ to $T$. Then, a directed edge is added between 
any two vertices when they are both adjacent on $G$ and in time, in the 
direction from time step $t$ to time step $t + 1$. 

To build the ILP model, for each robot and each edge (which is represented 
as the combination of a starting vertex, an end vertex $j$, and a time step 
$t$), a binary variable is created to represent whether the given robot uses 
that edge as part of its trajectory. Constraints are then added to make sure 
that no collision between any two robots could occur. The basic model 
from \cite{YuLav16TRO} only ensures that no two robot can use the same edge 
or vertex at the same time. In our case, more complex interactions must be 
considered, which is detailed as follows. 

Denoting $N(i)$ as the set of vertex $i \in V$ and its neighbors, 
the ILP model contains two sets of binary variables: {\em (i)} 
$\{x_{r, i, j, t} | 1 \leq r \leq n, i \in V, j \in N(i), 0 \leq t < T\}$, 
where $x_{r, i, j, t}$ indicates whether robot $r$ moves from vertex $i$ to 
$j$ between time step $t$ and $t + 1$. Note that by reachability test, some 
variables here are fixed to $0$. {\em (ii)} $\{x_{r, v_r^g, 
v_r^s, T} | 1 \leq r \leq n\}$ which stands for virtual edges between the goal 
vertex of each robot at time step $T$ and its start vertex at time step $0$. 
$x_{r, v_r^g, v_r^s, T}$ is set to $1$ {\em iff} $r$ reaches its goal 
at $T$. The objective of this ILP formulation is to maximize the 
number of robots that reach their goal vertices at $T$, i.e.,
\[\text{maximize} \sum_{1 \leq r \leq n} x_{r, v_r^g, v_r^s, T}\]
under the constraints
\begin{align}
    &\forall 1 \leq r \leq n, 0 \leq t < T,  
    \sum_{i \in N(j)} x_{r, i, j, t} = \sum_{k \in N(j)} x_{r, j, k, t + 1} 
    \label{equation:ilp-constraint-1} \\ 
    &\forall 1 \leq r \leq n,  
    \sum_{i \in N(v_r^s)} x_{r, v_r^s, i, 0}  
    = \sum_{i \in N(v_r^g)} x_{r, i, v_r^g, T - 1} = x_{r, v_r^g, v_r^s, T} 
    \label{equation:ilp-constraint-2} \\ 
    &\forall 0 \leq t < T, i \in V, 
    \sum_{1 \leq r \leq n} \sum_{j \in N(i)} x_{r, i, j, t} \leq 1.
    \label{equation:ilp-constraint-3} \\ 
    &\forall 0 \leq t < T, i \in V, j \in N(i), 
    \sum_{1 \leq r \leq n} x_{r, i, j, t} + 
    \sum_{1 \leq r \leq n} x_{r, j, i, t} \leq 1.
    \label{equation:ilp-constraint-4}     
\end{align}
Here, constraint (\ref{equation:ilp-constraint-1}) and 
(\ref{equation:ilp-constraint-2}) ensure a robot always starts from its start 
vertex, and can only stay at the current vertex or move to an adjacent vertex in 
each time step. Moreover, constraint (\ref{equation:ilp-constraint-2}) is 
essential for objective value calculation. 
Constraint (\ref{equation:ilp-constraint-3}) avoids robots from simultaneously 
occupying the same vertex, while constraint (\ref{equation:ilp-constraint-4}) 
eliminates head-to-head collisions on edges. 

For a triangular grid, one extra set of constraints must be imposed so that 
any two robots cannot simultaneously move on the same triangle. Denote 
$\angle_{ijk}$ as a sharp angle formed by edges $(i, j), (j, k) \in E$, and 
$\mathcal A$ as the set of all such angles in $G$, the constraint can be 
expressed as 
\begin{align}\label{equation-ilp-angle}
    \forall 0 \leq t < T, \angle{ijk} \in \mathcal A, \sum_{1 \leq r \leq n} 
    (x_{r, i, j, t} + x_{r, j, i, t} + x_{r, j, k, t} + x_{r, k, j, t}) \leq 1,  
\end{align}
which may be more compactly as (which also reduce the number of constraints):
\begin{align}\label{equation-ilp-triangle}
    \sum_{1 \leq r \leq n} (x_{r, i, j, t} + x_{r, j, i, t} + x_{r, i, k, t} 
    + x_{r, k, i, t} + x_{r, j, k, t} + x_{r, k, j, t}) \leq 
    \lfloor 3 / 2 \rfloor = 1.
\end{align}

Building on the ILP model, the overall route planning algorithm for triangular
grids, \triilp, is outlined in Alg. \ref{algo:ilp}. In line \ref{algo:ilp-T}, 
an underestimated makespan $T$ is computed by routing robots to goal vertices 
while ignoring mutual collisions. Then, as $T$ gradually increases (line 
\ref{algo:ilp-increment}), ILP models are iteratively constructed and solved (line 
\ref{algo:ilp-pre}-\ref{algo:ilp-solve}) until the resulting objective value 
$objval$ equals to $n$. In line \ref{algo:ilp-ret}, time-optimal paths are 
extracted and returned. Derived from \cite{YuLav16TRO}, \triilp has 
completeness and optimality guarantees. 

To improve the scalability of the ILP-based algorithm, a {\em $k$-way split 
heuristic} is introduced in \cite{YuLav16TRO} that adds intermediate 
robot configurations (somewhere in between the start and goal configurations)
to split the problem into sub-problems. These sub-problems require fewer 
steps to solve, which means that the corresponding ILP models are much smaller 
and can be solved much faster. This heuristic is directly applicable to \triilp.
\begin{algorithm}
    \small 
	\DontPrintSemicolon 
    $T \gets ${\sc UnderestimatedMakespan}$(G, V_{\S}, V_{\G})$\; \label{algo:ilp-T}
    \While{True}{
    $model \gets$ {\sc PrepareModel}$(G, V_{\S}, V_{\G}, T)$\; \label{algo:ilp-pre}
    $objval \gets$ {\sc Optimize}$(model)$\; \label{algo:ilp-solve}
    \lIf{objval {\normalfont equals to} $n$}
    {\Return {\sc ExtractSolution}$(model)$} \label{algo:ilp-ret}
    \lElse{$T \gets T + 1$} \label{algo:ilp-increment}
    }
	\caption{{\sc \triilp}} 
	\label{algo:ilp} 
\end{algorithm}

\subsection{Performance Evaluation}
We evaluate the performance of \triilp based on two standard measures: {\em 
computational time} and {\em optimality ratio}. To compute the optimality 
ratio, we first obtain the {\em underestimated makespan} $\hat{t_i}$ number 
of steps to move robots to their goals, ignoring potential robot-robot 
collisions, for a given problem instance $i$. Denoting $t_i$ as the makespan 
produced by \triilp of the $i$-th problem instance, the optimality ratio is 
defined as $(\sum_i t_i) / (\sum_i \hat{t_i})$. For each set of problem 
parameters, ten random instances are generated and the average is taken. All 
experiments are executed on an Intel\textsuperscript{\textregistered} 
Core\textsuperscript{TM} i7-6900K CPU with 32GB RAM at 2133MHz. For the ILP 
solver, Gurobi 8 is used \cite{gurobi}. 

We begin with \triilp on purely discrete multi-robot routing problems. On 
a densely occupied minimum triangular grid ($n_1 = 2, n_2 = 3, |V| = 22, n 
= 16$) as allowed by our formulation, a randomly generated problem can be 
solved optimally within $5$ seconds on average. For a much larger environment 
($n_1 = 7, n_2 = 16, |V| = 232)$, we evaluate \triilp with $k$-way split 
heuristics, gradually increasing the number of robots. As shown in Fig. 
\ref{fig:discrete-result}, \triilp could solve problems with $50$ robots 
\begin{figure}[h]
    \centering
    \includegraphics[width=0.9\linewidth]{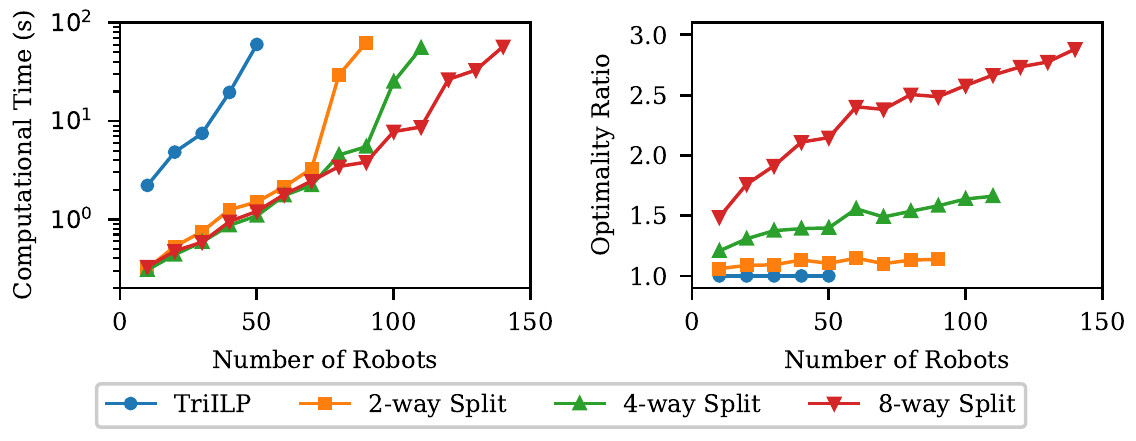}
    \caption{
        Performance of \triilp with $k$-way split heuristics on a triangular 
				grid with $232$ vertices and varying numbers of robots.}
    \label{fig:discrete-result}
\end{figure}
optimally in $60$ seconds. Performance of \triilp is significantly improved 
with $k$-way split heuristic: with $4$-way split, \triilp can solve problems 
with $110$ robots in $55$ seconds to $1.65$-optimal. With $8$-way split, we 
can further push to $140$ robots with reasonable optimality ratio. 

Solving (continuous) \oldr requires both \oldrdisc and \triilp. We first 
attempted a scenario of which the density approaches the theoretical limit 
by placing the robots just $\frac{8}{3}$ apart from each other in a regular 
(triangular) pattern for both start and goal configurations (see 
Fig.~\ref{fig:continuous-example}(a) for an illustration; we omit the labels 
of the robots, which are different for the start and goal configurations). 
After running \oldrdisc, we get a discrete arrangement as illustrated in 
Fig.~\ref{fig:continuous-example}(b). For this particular problem, we can 
compute a $1.5$-optimal solution in $2.1$ second without using splitting 
heuristics. 
\begin{figure}[h]
    \centering
		\begin{tabular}{ccc}
    \includegraphics[width=0.3\linewidth]{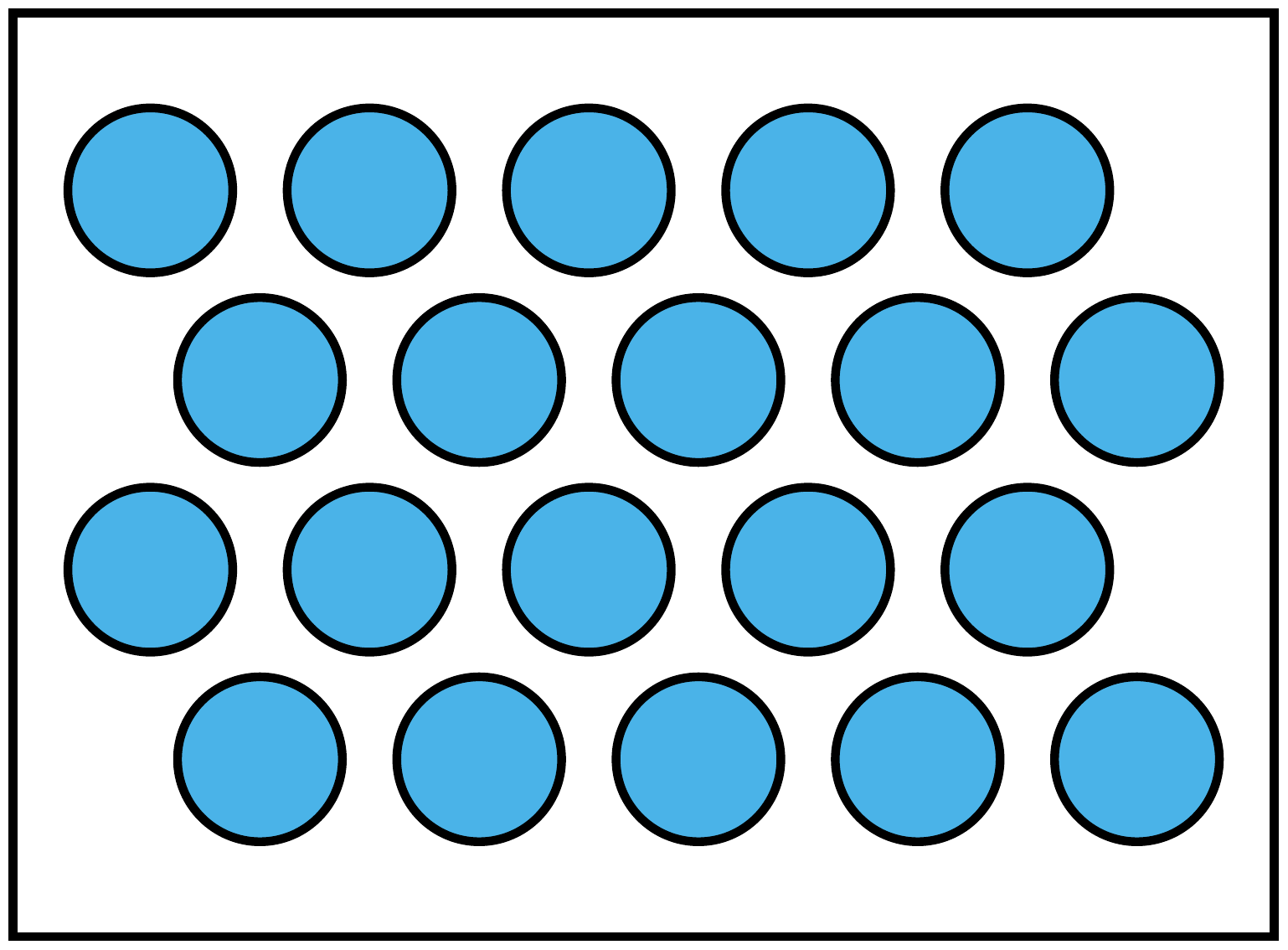}
		& \hspace{15mm} &
    \includegraphics[width=0.3\linewidth]{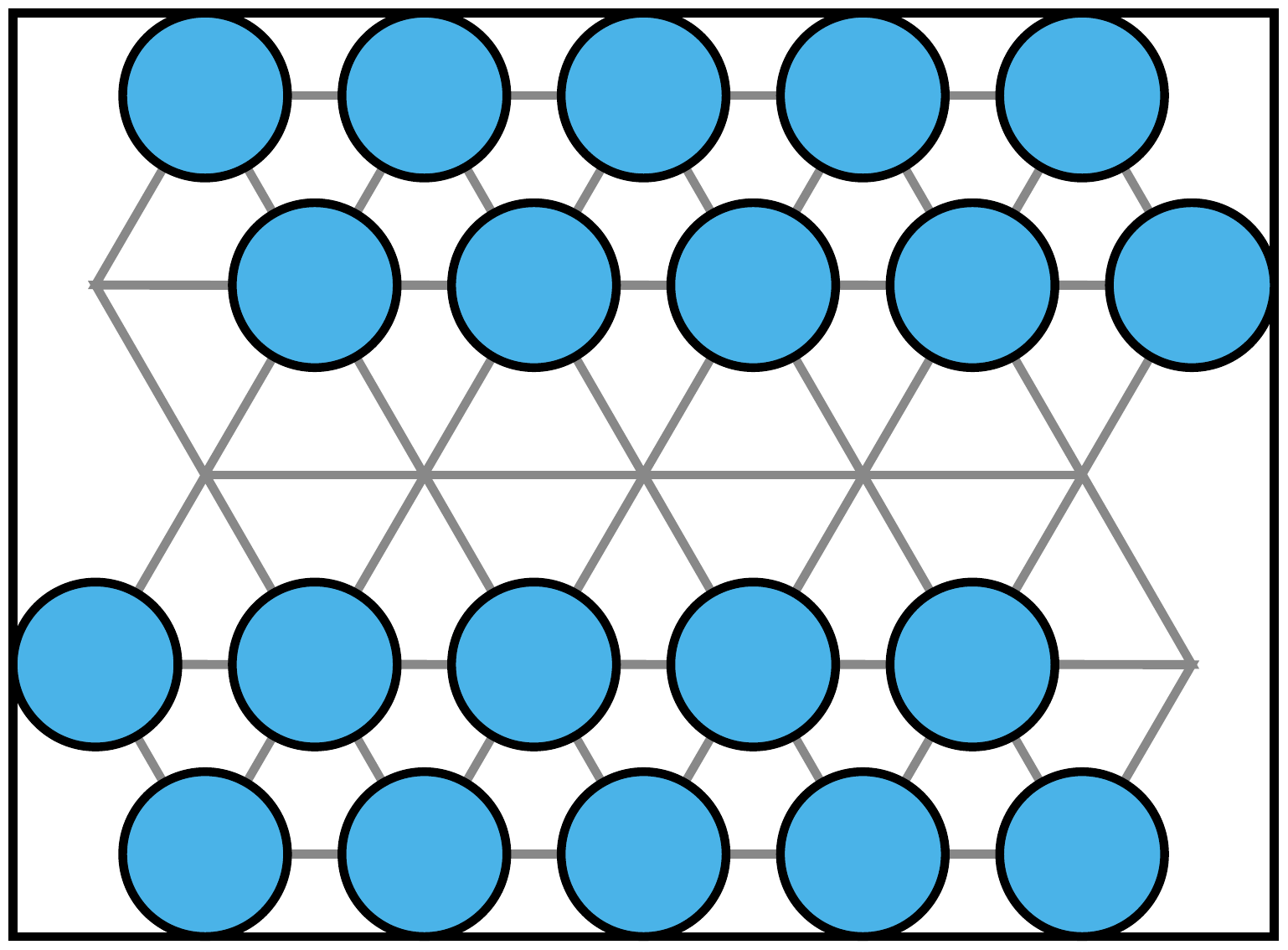}
		\\ (a) & & (b)
		\end{tabular}
    \caption{Illustration of a compact \oldr instance with $20$ densely packed 
    robots, and the configuration of robots after \oldrdisc.}
    \label{fig:continuous-example}
\end{figure} 

To test the effectiveness of combining \oldrdisc and \triilp, we 
constructed many instances similar to Fig.~\ref{fig:continuous-example} but 
with different environment sizes, always packing as many robots as possible 
with separation of exactly $\frac{8}{3}$. The computational performance of
this case is compiled in Fig.~\ref{fig:continuous-result}. With the $8$-way 
split heuristic, our method can solve tightly packed problems of $120$ robots 
in $21.93$ seconds with a $3.88$ optimality ratio. We note that the 
(underestimated) optimality ratio in this case actually decreases as the 
number of robots increases. This is expected because when the number of robots 
are small, the corresponding environment is also small. The optimality loss 
due to discretization is more obvious when the environment is smaller. 
\begin{figure}[h]
    \centering
    \includegraphics[width=0.9\linewidth]{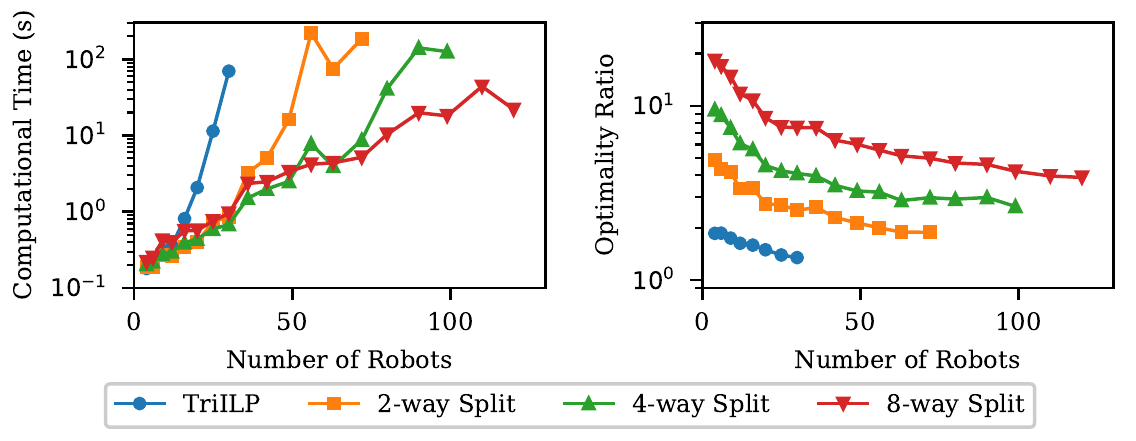}
    \caption{
        Performance of \triilp (plus \oldrdisc) on dense \oldr instances.}
    \label{fig:continuous-result}
\end{figure}

A second evaluation of \oldr carries out a comparison between \triilp (plus 
\oldrdisc) and \hexilp (the main algorithm from \cite{YuRus14RR}, which is 
based on a hexagonal grid discretization). We fix $\W$ with $w = 42$ and 
$h = 43.57$; the number of vertices in the triangular grid and hexagonal grid 
are $312$ and $252$, respectively. For each fixed number of robots $n$, $\S$ 
and $\G$ are randomly generated within $W$ that are at least $\frac{8}{3}$ 
apart. Note that this means that collisions may potentially happen for \hexilp 
during the discretization phase, which are ignored (to our disadvantage). The 
evaluation result is provided in Fig.~\ref{fig:comparison-result}. Since 
discretization based on triangular grid produces larger models, the running 
time is generally a bit higher when compared with discretization based on 
hexagonal grids. However, \triilp can solve problems with many more robots 
and also produce solutions with much better optimality guarantees.  
\begin{figure}[h]
    \centering
    \includegraphics[width=0.9\linewidth]{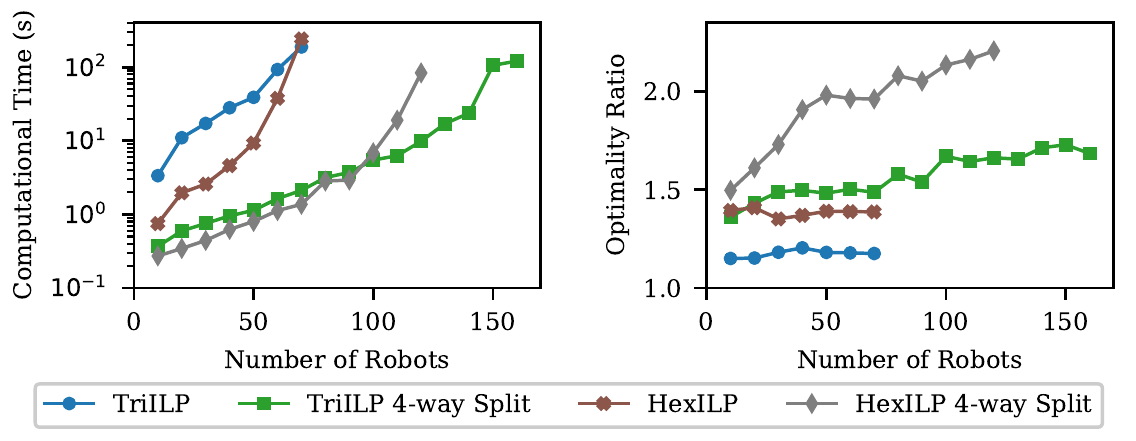}
    \caption{
        Performance comparison between \triilp and \hexilp (with and without 
				$4$-way split heuristic) on randomly generated \oldr instances with a 
				fixed $\W$.}
    \label{fig:comparison-result}
\end{figure}

\section{Conclusion and Future Work}\label{section:conclusion}
In this work, we have developed a complete, polynomial-time algorithm for
multi-robot routing in a bounded environment under extremely high robot 
density. The algorithm produces plans that are constant-factor time-optimal. 
A fast and more practical ILP-based algorithm capable of generating 
near-optimal solutions is also provided. We mention here that extensions to 
3D settings, which may be more applicable to drones and other airborne robot 
vehicles, can be readily realized under the same framework with only minor 
adjustments. 

Given the theoretical and practical importance of multi-robot (and more 
generally, multi-agent) routing in crowded settings, in future work, we would 
like to push robot density to be significantly higher than $50\%$. To 
achieve this while retaining optimality assurance, we believe the 
computation-based method developed in this work can be leveraged, perhaps in 
conjunction with a more sophisticated version of the \oldrdisc algorithm. On 
the other hand, a triangular grid supports a maximum density of $66\%$; it may 
be of interest to explore alternatives structures for accommodating 
denser robot configurations. 

\bibliographystyle{IEEEtran}
\bibliography{bib}
\end{document}